\documentclass{IEEEtran}
\usepackage{algorithmic}
\usepackage{graphicx}
\usepackage{graphics}
\usepackage{textcomp}
\usepackage{subfigure}
\usepackage{tabularx}
\usepackage{times}
\usepackage{multirow}
\usepackage{bm}
\usepackage{latexsym}
\usepackage{booktabs}
\usepackage{threeparttable}
\usepackage{epsf}
\usepackage[vlined,ruled,linesnumbered]{algorithm2e}
\usepackage{color,colortbl}

\usepackage{cite}
\usepackage{picinpar}
\usepackage{amsmath}
\usepackage{url}    
\usepackage[latin1]{inputenc}
\usepackage{soul}
\usepackage{multirow}
\usepackage{pifont}
\usepackage{alltt}
\usepackage{hyperref}
\usepackage{enumerate}
\usepackage{siunitx}
\usepackage{breakurl}
\usepackage{epstopdf}
\usepackage{pbox}
\usepackage{amsmath,amssymb,amsfonts}
\usepackage{amsthm}
\usepackage{mathrsfs}
\usepackage{makecell}
\usepackage{pbox}
\usepackage{supertabular}
\usepackage{multicol}
\usepackage{lipsum}
\usepackage{bm}
\usepackage{leftindex}
\newtheorem{theorem}{Theorem}

\newtheorem{remark}{Remark}
\newtheorem{lemma}{Lemma}
\newtheorem{definition}{Definition}
\newtheorem{assumption}{Assumption}
\newcommand{\blue}{\textcolor{blue}}

\def\BibTeX{{\rm B\kern-.05em{\sc i\kern-.025em b}\kern-.08em
		T\kern-.1667em\lower.7ex\hbox{E}\kern-.125emX}}
\begin{document}
	\title{Integrated Cooperative Localization of Heterogeneous Measurement Swarm: A Unified Data-driven Method}
	\author{Kunrui Ze, Wei Wang, \emph{IEEE Senior Member}, Guibin Sun, Jiaqi Yan, Kexin Liu,  {Jinhu L\"u}, \emph{IEEE Fellow}
		\thanks{
			This work was supported in part by the National Key Research and Development Program of China under Grant 2022YFB3305600, and in part by the National Natural Science Foundation of China under Grants 625B2017, 62503028, 62141604 and 62373019. 
			\emph{(Corresponding authors: Jinhu L\"u.)}}
		\thanks{
			Kunrui Ze, Guibin Sun Jiaqi Yan and Kexin Liu are with the School of Automation Science and Electrical Engineering, Beihang University, Beijing 100191, China (e-mail: kr\_ze@buaa.edu.cn; sunguibinx@buaa.edu.cn; jqyan@buaa.edu.cn, kxliu@buaa.edu.cn).
			Wei Wang and {Jinhu L\"u} are with the School of Automation Science and Electrical Engineering, Beihang University, Beijing 100191, China, and also with the Zhongguancun Laboratory, Beijing 100083, China (e-mail: w.wang@buaa.edu.cn; jhlu@iss.ac.cn).}}
	
	\maketitle
	
	\begin{abstract}\blue{
			The cooperative localization (CL) problem in heterogeneous measurement swarm with different measurement capabilities is investigated in this work.
			In practice, heterogeneous sensors lead to directed and sparse measurement topologies, whereas most existing CL approaches rely on multilateral localization with restrictive multi-neighbor geometric requirements.
			To overcome this limitation, we enable pairwise relative localization (RL) between neighboring robots using only mutual measurement and odometry information.
			A unified data-driven adaptive RL estimator is first developed to handle heterogeneous and unidirectional measurements.
			Based on the convergent RL estimates, a distributed pose-coupling CL strategy is then designed, which guarantees CL under a weakly connected directed measurement topology, representing the least restrictive condition among existing results.
			The proposed method is independent of specific control tasks and is validated through a formation control application and real-world experiments.
	}\end{abstract}
	
	\begin{IEEEkeywords}
		Heterogeneous measurement, cooperative localization, adaptive estimation, nonholonomic robots.
	\end{IEEEkeywords}
	
	\section{Introduction}
	
	
	To achieve multi-robot cooperation in unstructured environments where external localization systems such as GPS are unavailable, cooperative localization (CL) is a fundamental capability.
	\blue{The CL task aims to enable each robot to estimate its relative pose with respect to a leader or reference robot using only local measurements and distributed interactions with neighboring robots \cite{ze2025tase,fang2025Auto}.
		In practical robotic systems, multiple sensing modalities are often deployed to enhance robustness.}
	However, due to the possibility of environmental interference or physical damage to sensors, such as the failure of bearing sensors in strong light and dark environments, the sensors available for each robot may be different.
	\blue{Such sensing heterogeneity naturally results in sparse and directed measurement topologies.
		Therefore, it is essential to investigate CL task under heterogeneous measurement conditions.}
	
	\blue{Existing studies on CL in heterogeneous measurement swarm typically impose specific assumptions on the measurement topology to ensure observability and convergence. 
		Representative works, such as \cite{xie2023tac}, achieve CL under a multi-layer heterogeneous sensing architecture.
		Another representative study \cite{xie2025tro} investigates what inter-robot measurements are sufficient to ensure network localizability.
		Despite differences in sensing modalities, most existing approaches follow a multilateral localization paradigm, in which each robot recovers its pose from measurements to multiple neighbors and propagates the estimate across the network. 
		As a result, various structural conditions are imposed on the measurement topology to guarantee sufficient geometric constraints, including localizability requirements \cite{zhao2016Auto,ze2025tase}, specific graph constructions such as Henneberg and twin leader-follower frameworks \cite{lv2024tii,van2020Auto}, the minimum number of neighbors and non-degenerate geometric conditions on their relative placement\cite{boughellaba2023lcs,zheng2025Auto}, or directed topologies containing a spanning tree \cite{lcs2024bounded,tnse2025}.}
	
	\blue{While these multilateral localization mechanisms provide sufficient constraints for pose estimation, they inherently require each robot to access measurements from multiple neighbors, which significantly limits applicability in heterogeneous sensing environments. 
		In practice, relative measurements may be sparse, different, or even unavailable for certain robots due to sensor diversity or failures, rendering such topological conditions difficult to satisfy. 
		Consequently, existing CL methods are generally unable to operate under weakly connected or directed measurement topologies. 
		This exposes a fundamental and largely unexplored challenge: How to achieve reliable relative localization (RL) between arbitrary neighboring robot pairs in a unified manner, without relying on multilateral measurements or restrictive topological assumptions.}
	
	\blue{Motivated by the above limitations, this paper proposes a unified data-driven CL method for heterogeneous measurement swarm.
		Different from existing approaches that rely on multilateral localization using relative measurements from multiple neighbors, the proposed method achieves RL at the pairwise level between neighboring robots in a data-driven manner, which provides the foundation for the overall CL method.
		Based on the pairwise RL mechanism, distributed CL estimators are further developed to accomplish CL task under a weakly connected directed measurement graph, which represents one of the least restrictive conditions on the measurement topology compared to existing CL methods.
		As a result, the proposed method avoids stringent structural requirements such as Henneberg-type constructions and offers improved scalability and practicality.
		Finally, the effectiveness of the proposed strategy is validated through its application to a formation control experiment of three-dimensional nonholonomic robots.
	}
	
	\textbf{Notations:} 
	\blue{Define $\mathcal{O}_{i}$ as the odometer reference frame of robot $i$, which is fixed at the initial time $t_0$, and $\Sigma_{i}$ as the body frame attached to robot $i$.
		At time $t_0$, the two frames coincide in both position and orientation. 
		In this work, the relative orientation between coordinate frames only involves a yaw rotation.
		The rotation matrix between frames 
		$\mathcal{T}_1$ and $\mathcal{T}_2$ is parameterized by the relative yaw angle 
		$\theta^{\mathcal{T}_1}_{\mathcal{T}_2}$ and defined as
		$\bm{R}^{\mathcal{T}_1}_{\mathcal{T}_2}
		=
		\begin{bmatrix}
			\cos \theta^{\mathcal{T}_1}_{\mathcal{T}_2} & -\sin \theta^{\mathcal{T}_1}_{\mathcal{T}_2} & 0\\
			\sin \theta^{\mathcal{T}_1}_{\mathcal{T}_2} & \cos \theta^{\mathcal{T}_1}_{\mathcal{T}_2} & 0\\
			0 & 0 & 1
		\end{bmatrix} \triangleq
		\bm{R}\!\left(\cos\theta^{\mathcal{T}_1}_{\mathcal{T}_2}, \sin\theta^{\mathcal{T}_1}_{\mathcal{T}_2}\right).$
		In the remainder of this paper, $\bm{R}^{\mathcal{T}_1}_{\mathcal{T}_2}$ and 
		$\bm{R}\!\left(\cos\theta^{\mathcal{T}_1}_{\mathcal{T}_2}, \sin\theta^{\mathcal{T}_1}_{\mathcal{T}_2}\right)$ are used interchangeably.
		For ease of reading, we list the main symbols of this paper in Appendix A.
	} 
	
	\section{Preliminaries and problem formulation}
	\subsection{Robot model}
	
	Consider $N+1$ mobile robots in $\mathbb{R}^{3}$ modeled by 
	\begin{align}
		\label{robot}
		\dot{\bm{z}}^{\mathcal{O}_{i}}_{i}(t) =  \begin{bmatrix}
			\cos \theta_{\Sigma_{i}}^{\mathcal{O}_{i}}(t),& 0\\
			\sin \theta_{\Sigma_{i}}^{\mathcal{O}_{i}}(t),& 0\\
			0,& 1
		\end{bmatrix} 
		\bm{v}_{i}^{\mathcal{O}_{i}}, \quad
		\dot{\theta}_{\Sigma_{i}}^{\mathcal{O}_{i}}(t) = w_{i},
	\end{align}
	\blue{where $\bm{z}^{\mathcal{O}_{i}}_{i}$ and $\theta_{\Sigma_{i}}^{\mathcal{O}_{i}}$ represent the translational and rotational displacements of the robot body frame $\Sigma_i$
		with respect to the initial odometer frame $\mathcal{O}_i$, which are measurable from the onboard odometry.}
	$\bm{v}_{i}^{\mathcal{O}_{i}}\in \mathbb{R}^2$ and $w_{i}$ represent the linear velocity and yaw rate of robot $i$ in frame $\mathcal{O}_{i}$, respectively.
	$v_{\mathrm{max}}$ and $w_{\mathrm{max}}$ denote the upper bounds of the linear velocity and yaw rate of robot $i$, i.e., $\lvert w_{i} \rvert \leq w_{\mathrm{max}}$ and $\lVert \bm{v}_{i}^{\mathcal{O}_{i}} \rVert \leq v_{\mathrm{max}}$.
	\blue{
		In this paper, heterogeneity refers to differences in the perception capabilities of robots. The kinematic model \eqref{robot}, which describes the evolution of position and yaw, is unified for all robots and serves as a generic abstraction that does not rely on specific nonholonomic constraints.}
	
	\subsection{Measurement topology and communication topology} 
	\blue{ To formalize the relative geometry between robots $i$ and $j$, define $\bm{p}^{\mathcal{O}_{i}}_{ij}(t_{0})$ as the initial relative position in frame $\mathcal{O}_{i}$ and $\bm{R}_{\mathcal{O}_{i}} ^{\mathcal{O}_{j}} :=  \bm{R}\!\left(\cos\theta^{\mathcal{O}_{j}}_{\mathcal{O}_{i}}(t_{0}), \sin\theta^{\mathcal{O}_{j}}_{\mathcal{O}_{i}}(t_{0})\right)$ as the rotation matrix from $\mathcal{O}_{i}$ to $\mathcal{O}_{j}$,
		where $\theta^{\mathcal{O}_{j}}_{\mathcal{O}_{i}}(t_{0})$ is the initial relative yaw angle between robot $i$ and $j$.
		Both $\bm{p}^{\mathcal{O}_{i}}_{ij}(t_{0})$ and $\bm{R}^{\mathcal{O}_{i}} _{\mathcal{O}_{j}}$ are unknown.
		With the odometer, the following equations hold:
		\begin{align}
			\notag
			\label{initial}
			\bm{p}^{\Sigma_{i}}_{ij}(t) &= \left( \bm{\bm{R}}^{\mathcal{O}_{i}}_{\Sigma_{i}} \right)^T \left( \bm{p}^{\mathcal{O}_{i}}_{ij}(t_{0}) + \bm{z}^{\mathcal{O}_{i}}_{i}(t) - \left( \bm{\bm{R}}_{\mathcal{O}_{i}}^{\mathcal{O}_{j}} \right)^T \bm{z}^{\mathcal{O}_{j}}_{j}(t) \right), \\
			\bm{\bm{R}}_{\Sigma_{i}}^{\Sigma_{j}} &= \left( \bm{\bm{R}}_{\Sigma_{j}}^{\mathcal{O}_{j}} \right)^{T} \bm{\bm{R}}^{\mathcal{O}_{j}}_{\mathcal{O}_{i}} \bm{\bm{R}}_{\Sigma_{i}}^{\mathcal{O}_{i}},
		\end{align}
		where $\bm{p}^{\Sigma_{i}}_{ij}$ denotes the real-time relative position between robots $i$ and $j$ in frame $\Sigma_{i}$. 
		$\bm{z}^{\mathcal{O}_{i}}_{i}(t)$ and $\bm{\bm{R}}^{\mathcal{O}_{i}}_{\Sigma_{i}} = \bm{R}\!\left(\cos\theta_{\Sigma_{i}}^{\mathcal{O}_{i}}, \sin\theta_{\Sigma_{i}}^{\mathcal{O}_{i}}\right)$ can be measured through odometer by robot $i$ directedly, similar to $\bm{z}^{\mathcal{O}_{j}}_{j}(t)$ and $\bm{\bm{R}}^{\mathcal{O}_{j}}_{\Sigma_{j}}$. }
	\blue{In a heterogeneous measurement swarm, some robots are equipped with relative sensors, while others rely solely on odometry. Let $m^{\Sigma_i}_{ij}(t_k)$ denote the measurement obtained by robot $i$ with respect to robot $j$ at time $t_k$ and  satisfies}
	\blue{\begin{align}
			\notag
			m_{ij}^{\Sigma_{i}}(t_{k}) \in \{ \bm{p}^{\Sigma_{i}}_{ij}, \quad
			d_{ij} = \lVert \bm{p}^{\Sigma_{i}}_{ij} \rVert, \quad
			\bm{\phi}^{\Sigma_{i}}_{ij}
			= \frac{\bm{p}^{\Sigma_{i}}_{ij}}{\lVert \bm{p}^{\Sigma_{i}}_{ij} \rVert}, \varnothing \},
		\end{align} 
		where $\bm{p}^{\Sigma_{i}}_{ij}$, $d_{ij}$, and $\bm{\phi}^{\Sigma_{i}}_{ij}$
		denote the relative position, distance, and bearing of robot $j$ with respect to
		robot $i$, expressed in frame $\Sigma_{i}$ and obtained using relative sensors.
		The symbol $\varnothing$ indicates that robot $i$ is not equipped with any
		relative sensors and relies solely on odometry
		(e.g., robot~3 in Fig.~\ref{topology}).
	}
	
	\blue{\textbf{Measurement topology}: The \emph{directed measurement topology} is described as $\mathcal{G}_{m}=(\mathcal{V},\mathcal{E}_{m})$, each node $\mathcal{V}_{i}\in \mathcal{V}$ represents a robot. 
		A directed edge $(i,j) \in \mathcal{E}_{m}$ from robot $j$ to robot $i$
		indicates that robot $i$ can measure one of the following relative quantities
		with respect to robot $j$, i.e. $m_{ij}^{\Sigma_{i}}(t_{k}) \in \{ \bm{p}^{\Sigma_{i}}_{ij}, \quad
		d_{ij}, \quad
		\bm{\phi}^{\Sigma_{i}}_{ij}\}$.
		Robots $i$ and $j$ are defined as neighboring robots if
		$(i,j) \in \mathcal{E}_{m}$ or $(j,i) \in \mathcal{E}_{m}$.}
	
	\begin{figure}[!t]\centering
		\includegraphics[scale=1.066]{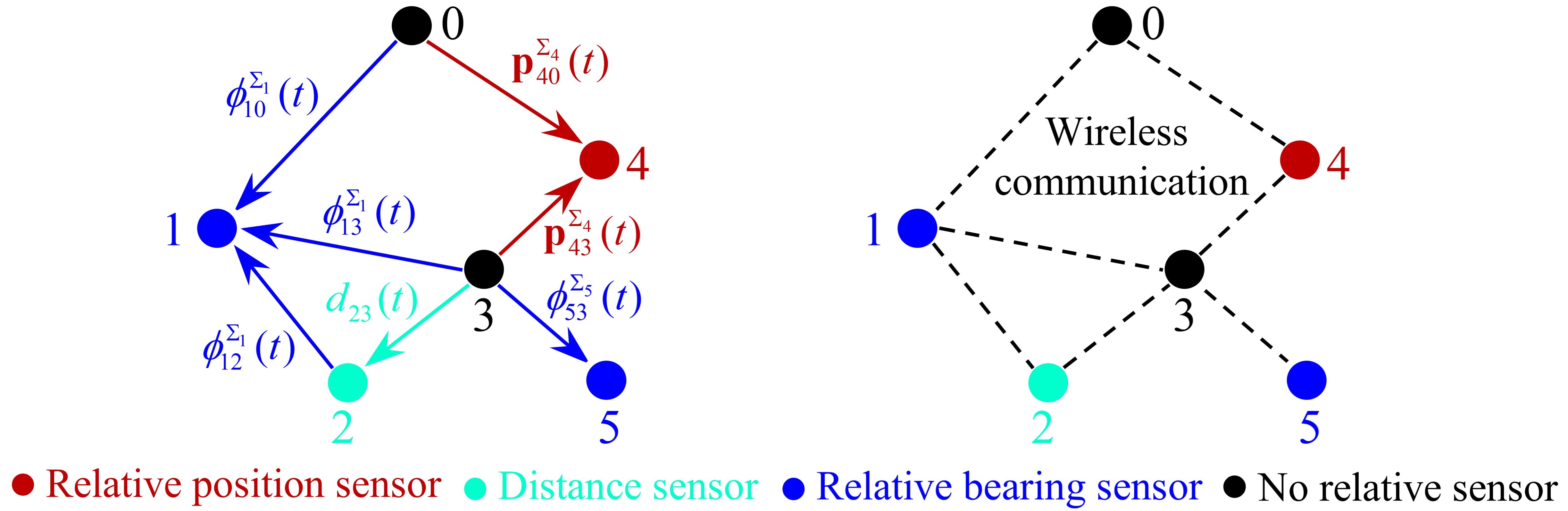}
		\caption{An example of measurement topology and communication topology. Left: measurement topology. Right: communication topology.}\centering
		\label{topology}
		\vspace{-0.5cm}
	\end{figure}
	
	\blue{\textbf{Communication topology}: In this paper, neighboring robots are assumed to be able to exchange information
		through an undirected communication network.
		This assumption is physically reasonable and widely adopted in CL studies \cite{zheng2025Auto,xie2025tro}, since wireless communication
		is typically symmetric and less susceptible to environmental interference than
		directional sensors such as cameras.
		Accordingly, the \emph{undirected communication topology} is modeled as $\mathcal{G}_{c} = (\mathcal{V}, \mathcal{E}_{c})$,
		where $(i,j) \in \mathcal{E}_{c}$ if and only if
		$(i,j) \in \mathcal{E}_{m}$ or $(j,i) \in \mathcal{E}_{m}$.}
	Let $\mathcal{L}$ denote the Laplacian matrix of $\mathcal{G}_{c}$, and $a_{ij} = a_{ji} = 1$ if and only if robot $i$ and $j$ are neighboring robots.
	An example of the directed measurement topology and its corresponding
	communication topology is shown in Fig.~\ref{topology}.
	
	Without loss of generality, robot $0$ is designated as a leader providing a
	reference frame.
	Robot $i$ is referred to as an \emph{informed robot} if robot $0$ is a neighbor of it,
	and the indicator $\mu_i$ is defined accordingly.
	Define the informed matrix
	$\mathcal{B} = \mathrm{diag}\{\mu_1, \ldots, \mu_N\}$.
	\blue{To achieve the CL task, we introduce the following assumption:}
	\begin{assumption}
		\label{top}
		The directed measurement graph $\mathcal{G}_{m}$ is weakly connected, i.e.,
		its undirected underlying graph is connected.
	\end{assumption}
	
	Note that Assumption \ref{top} is the least restrictive assumption on the measurement topology, when compared with existing results. 
	If this condition is not met, the swarm will be divided into multiple subgroups. 
	Robots in subgroups that do not contain the leader cannot accomplish CL to the leader.
	Under Assumption \ref{top}, the communication topology is connected.
	
	\subsection{Objective of cooperative localization}
	Define the transformation matrix $\bm{\bm{T}}^{\Sigma_{0}}_{\Sigma_{i}}$ between the robot $i$'s body frame $\Sigma_{i}$ and the leader robot $0$'s body frame $\Sigma_{0}$ as
	\begin{align}
		\notag
		\bm{\bm{T}}^{\Sigma  _{0}}_{\Sigma_{i}} := \begin{bmatrix}
			\bm{\bm{R}}^{\Sigma_{0}}_{\Sigma_{i}}, & \bm{p}^{\Sigma_{i}}_{i0}(t) \\
			\bm{0}^{T}, & 1
		\end{bmatrix},
	\end{align}
	\blue{where $\bm{p}^{\Sigma_{i}}_{i0}$ denotes the real-time relative position between robot $i$ and the leader in frame $\Sigma_{i}$. $\bm{R}^{\Sigma_{0}}_{\Sigma_{i}}$ is the transformation matrix from frame $\Sigma_{i}$ to $\Sigma_{0}$.}
	
	\blue{For the network robot swarm, the distributed CL task aims to design real-time CL estimator $\hat{\bm{p}}^{\Sigma_{i}}_{i0}$ and $\hat{\bm{R}}^{\Sigma_0}_{\Sigma_i}$ to asymptotically estimate $\bm{p}^{\Sigma_{i}}_{i0}$ and $\bm{R}^{\Sigma_{0}}_{\Sigma_{i}}$ with local relative measurement and communication, described by the directed measurement topology and undirected communication topology, respectively.
		Recall (\ref{initial}), the $\bm{z}^{\mathcal{O}_{i}}_{i}(t)$ and $\bm{\bm{R}}^{\mathcal{O}_{i}}_{\Sigma_{i}}$ can be measured through odometer by robot $i$ directedly. 
		To achieve CL task, 
		estimators $\hat{\bm{p}}_{i,t_0}$ and $\hat{\bm{\bm{R}}}_{\mathcal{O}_{i}}^{\mathcal{O}_{0}}$ should be designed to estimate the initial relative pose $\bm{p}_{i0}^{\mathcal{O}_{0}}(t_{0})$ in frame $\mathcal{O}_{0}$ and the initial rotation matrix $\bm{\bm{R}}^{\mathcal{O}_{0}}_{\mathcal{O}_{i}}$ between robot $i$ and $0$. 
		As a result, the initial relative pose $\bm{p}^{\mathcal{O}_{i}}_{ij}(t_{0})$ in frame $\mathcal{O}_{i}$ can be estimated as $\left( \hat{\bm{\bm{R}}}_{\mathcal{O}_{i}}^{\mathcal{O}_{0}}\right)^T \hat{\bm{p}}_{i,t_0}$
		Besides, observer $\hat{\bm{z}}_{i}$ and $\hat{\bm{\bm{R}}}^{\mathcal{O}_{0}}_{\Sigma_{0}}$ should be designed to estimate the odometry information $\bm{z}^{\mathcal{O}_{0}}_{0}(t)$ and $\bm{\bm{R}}^{\mathcal{O}_{0}}_{\Sigma_{0}}$ of the leader.
	} 
	
	\begin{remark}
		\blue{There are two key challenges in performing CL under a heterogeneous measurement topology.
			First, in a weakly connected measurement graph, a robot may have access to only
			a single neighbor that is equipped solely with odometry (e.g., robot~5 in
			Fig.~\ref{top}), which renders traditional approaches based on multi-neighbor
			geometric constraints inapplicable.
			Therefore, achieving RL between arbitrary neighboring robot pairs becomes a critical prerequisite and is addressed in Section~III.
			Second, due to the non-uniform odometer frames of the robots, the coupling between the CL estimation errors of neighboring robots is inherently nonlinear, which prevents the direct application of conventional linear consensus-based methods, such as \cite{liu2023Auto}.
			Building upon the proposed RL method, the CL problem under non-uniform
			odometer frames is investigated in Section~IV.}
	\end{remark}
	
	To accomplish the CL task, the following lemma are useful.
	\begin{lemma} \cite{Biggs1993book}
		\label{connected}
		If the undirected graph $\mathcal{G}_{c}$ is connected and at least one robot has direct access to the leader robot $0$. The matrix $(\mathcal{L} + \mathcal{B})$ is positive definite.
	\end{lemma}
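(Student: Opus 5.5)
The plan is to prove positive definiteness directly through the quadratic form. Write $\mathcal{L}$ as the Laplacian of the follower subgraph of $\mathcal{G}_{c}$ on robots $1,\ldots,N$, with weights $a_{ij}=a_{ji}\in\{0,1\}$. Write $\mathcal{B}=\mathrm{diag}\{\mu_1,\ldots,\mu_N\}$ with $\mu_i\in\{0,1\}$. For any $\bm{x}=[x_1,\ldots,x_N]^T\in\mathbb{R}^N$ we have the standard identity
\begin{align}
\notag
\bm{x}^T(\mathcal{L}+\mathcal{B})\bm{x}=\frac{1}{2}\sum_{i=1}^{N}\sum_{j=1}^{N}a_{ij}(x_i-x_j)^2+\sum_{i=1}^{N}\mu_i x_i^2 \geq 0 .
\end{align}
This identity follows from $\mathcal{L}$ being symmetric with zero row sums, together with $\mathcal{B}\succeq 0$. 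So $\mathcal{L}+\mathcal{B}$ is symmetric positive semidefinite, and it remains to show the form vanishes only at $\bm{x}=\bm{0}$.

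Suppose $\bm{x}^T(\mathcal{L}+\mathcal{B})\bm{x}=0$. Both sums are nonnegative, so each vanishes:
\begin{itemize}
\item $x_i=x_j$ for every edge $(i,j)$ between followers;
\item $x_i=0$ for every informed robot $i$, i.e., every robot with $\mu_i=1$.
\end{itemize}

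The key step is connectivity. By Assumption~\ref{top}, $\mathcal{G}_{c}$ on $\{0,1,\ldots,N\}$ is connected. Take the augmented vector $(x_0,x_1,\ldots,x_N)$ with $x_0:=0$. The conditions $\mu_i x_i^2=0$ are exactly the statement $x_i=x_0$ across every edge $(i,0)$, since $\mu_i=1$ iff $(i,0)\in\mathcal{E}_c$. So the augmented vector is constant along every edge of the connected graph $\mathcal{G}_{c}$. Propagating along a path from node $0$ to any node $k$ then gives $x_k=x_0=0$ for all $k$, hence $\bm{x}=\bm{0}$ and $\mathcal{L}+\mathcal{B}\succ 0$. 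The hypothesis that some robot has $\mu_i=1$ is what makes $0$ reachable; without it $\bm{1}_N$ would lie in the kernel.

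The main point needing care is bookkeeping rather than difficulty. One must make sure that $\mathcal{L}$ here is the $N\times N$ follower Laplacian, i.e., the grounded Laplacian obtained by deleting node $0$. With that reading, connectivity of the full graph including the leader is what is required, and the path argument above uses exactly that. If instead $\mathcal{L}$ were read as the Laplacian of the follower-only subgraph assumed connected, the same argument works even more directly: $\bm{x}=c\bm{1}$, and then a single $\mu_i=1$ forces $c=0$.

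An alternative route, if preferred, is to observe that $\mathcal{L}+\mathcal{B}$ is an irreducibly diagonally dominant M-matrix, with strict dominance in at least one row. Nonsingularity then follows from Taussky's theorem, and symmetry together with Gershgorin gives positive definiteness. The quadratic-form argument is cleaner and will be the one used.
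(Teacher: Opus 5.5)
Your argument is correct and is essentially the paper's approach. For Lemma~\ref{connected} itself the paper only cites \cite{Biggs1993book}, but its proof of Lemma~\ref{lemma1} uses the same scheme you do: a sum-of-squares identity for the quadratic form, then propagation of zeros from an informed robot through the connected graph; yours is the special case where every $\bar{\bm{R}}_{\mathcal{O}_{i}}^{\mathcal{O}_{j}}$ is the identity. One terminological slip: the grounded Laplacian, obtained by deleting the leader's row and column from the full Laplacian, is $\mathcal{L}+\mathcal{B}$ itself, not $\mathcal{L}$; what your identity actually uses, correctly, is the Laplacian of the follower-induced subgraph.
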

	
	\begin{lemma} \cite{Zhao2023TM}
		\label{tandem}
		Consider a perturbed system
		\begin{align}
			\label{system}
			\dot{\bm{\chi}} = f(t,\bm{\chi}) + g(t,\bm{\chi},\bm{\Xi}),
		\end{align}
		where $\bm{\chi}$ is the state and $\bm{\Xi}$ is an exogenous signal. $f(t,\bm{\chi})$ and $g(t,\bm{\chi},\bm{\Xi})$ are piecewise continuous in $t$ and locally Lipschitz in $\bm{\chi}$ and $(\bm{\chi},\bm{\Xi})$, respectively. Normal system $\dot{\bm{\chi}} = f(t,\bm{\chi})$ in (\ref{system}) is uniformly exponentially stable and $g(t,\bm{\chi},\bm{\Xi})$ satisfies $\lVert g(t,\bm{\chi},\bm{\Xi}) \rVert \leq a_{1} \lVert \bm{\chi} \rVert \lVert \bm{\Xi} \rVert + a_{2} \lVert \bm{\Xi} \rVert$
		,with two positive constants $a_{1}$ and $a_{2}$. Then system (\ref{system}) uniformly asymptotically stable when $\lVert \bm{\chi} \rVert$ converges to $\bm{0}$ asymptotically.
	\end{lemma}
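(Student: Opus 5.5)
The plan is a Lyapunov argument built from the converse theorem for uniformly exponentially stable systems, followed by a scalar comparison argument. I read the hypothesis as requiring the exogenous signal $\bm{\Xi}(t)$ to converge to $\bm{0}$ asymptotically; the statement writes $\lVert \bm{\chi} \rVert$ there, which I take to be a typo for $\lVert \bm{\Xi} \rVert$. I also assume $\bm{\Xi}$ is piecewise continuous and hence bounded on every compact time interval.

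\textbf{Step 1 (Lyapunov function for the nominal system).} Since $\dot{\bm{\chi}} = f(t,\bm{\chi})$ is uniformly exponentially stable and $f$ is locally Lipschitz, the converse Lyapunov theorem gives a function $V(t,\bm{\chi})$ and constants $c_1,\dots,c_4>0$. They satisfy $c_1\lVert\bm{\chi}\rVert^2 \le V \le c_2\lVert\bm{\chi}\rVert^2$ and $\partial_t V + \partial_{\bm{\chi}} V\, f \le -c_3 \lVert\bm{\chi}\rVert^2$. They also satisfy the gradient bound $\lVert \partial_{\bm{\chi}} V\rVert \le c_4\lVert\bm{\chi}\rVert$, at least on a region where the trajectory stays.

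Differentiating $V$ along the perturbed system (\ref{system}) and using the bound on $g$ gives
\begin{align}
\notag
\dot V \le -c_3\lVert\bm{\chi}\rVert^2 + c_4 a_1 \lVert\bm{\chi}\rVert^2\lVert\bm{\Xi}\rVert + c_4 a_2 \lVert\bm{\chi}\rVert \lVert\bm{\Xi}\rVert .
\end{align}

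\textbf{Step 2 (reduction to a scalar linear inequality).} Set $W=\sqrt{V}$. Away from $W=0$, and in the usual Dini-derivative sense at $W=0$, we obtain
\begin{align}
\notag
\dot W \le \left(-\alpha + \beta \lVert \bm{\Xi}(t)\rVert\right) W + \gamma \lVert \bm{\Xi}(t)\rVert ,
\end{align}
with $\alpha = c_3/(2c_2)$, $\beta = c_4 a_1/(2c_1)$ and $\gamma = c_4 a_2/(2\sqrt{c_1})$.

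This inequality is linear in $W$ with a bounded time-varying coefficient. By the comparison lemma, $W$ is bounded above by the solution of the corresponding scalar linear ODE. That solution cannot escape in finite time because $\lVert\bm{\Xi}\rVert$ is bounded on compact intervals. Hence $\bm{\chi}(t)$ exists and stays bounded on $[t_0,\infty)$.

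\textbf{Step 3 (convergence).} Because $\bm{\Xi}\to\bm{0}$, there exists $T$ such that $\beta\lVert\bm{\Xi}(t)\rVert \le \alpha/2$ for all $t\ge T$. For $t \ge T$ the inequality becomes
\begin{align}
\notag
\dot W \le -\tfrac{\alpha}{2} W + \gamma \lVert\bm{\Xi}(t)\rVert ,
\end{align}
which is an exponentially stable scalar system driven by a vanishing input. The variation-of-constants formula, with the convolution split at $t/2$, then gives $W(t)\to 0$ and therefore $\lVert\bm{\chi}(t)\rVert\to 0$.

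On $[t_0,T]$, Gr\"onwall's inequality bounds $W(T)$ by a constant times $W(t_0)+\sup_{[t_0,T]}\lVert\bm{\Xi}\rVert$. Combining this with the decay for $t\ge T$ yields a $\mathcal{KL}$-type estimate in the initial condition. That estimate gives stability, and attractivity follows from the convergence just shown.

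\textbf{Main obstacle: uniformity and globality.} The hard part is making the result uniform in $t_0$ and valid globally. If the constants $c_i$ are only local, the gradient bound holds only on a ball, so I would first use Step 2 to show the trajectory remains in a compact set on which the constants are valid. For uniformity, $T$ and the bound on $\sup\lVert\bm{\Xi}\rVert$ must not depend on $t_0$. I would first try to obtain this by requiring $\bm{\Xi}$ to converge uniformly, or to be integrable. In the paper's application this is natural, since $\bm{\Xi}$ is an exponentially convergent RL estimation error, which makes all the constants above independent of $t_0$.
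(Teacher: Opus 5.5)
The paper gives no proof of this lemma; it is cited from \cite{Zhao2023TM}. So the benchmark is the standard argument, which is the one you give: a converse Lyapunov function, then $W=\sqrt V$ and comparison. Steps~2 and~3 are correct \emph{provided} Step~1 holds with global constants. Your $\alpha,\beta,\gamma$ are right, and splitting at $T$ with Gr\"onwall on $[t_0,T]$ gives boundedness and $\bm{\chi}(t)\to\bm{0}$ under the correctly repaired hypothesis $\bm{\Xi}(t)\to\bm{0}$.

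\textbf{The gap is Step~1.} The converse theorem gives $\lVert\partial_{\bm{\chi}}V\rVert\le c_4\lVert\bm{\chi}\rVert$ on all of $\mathbb{R}^n$ only when $f$ is Lipschitz in $\bm{\chi}$ globally and uniformly in $t$ (bounded Jacobian, as in Khalil's Thm.~4.14). Local Lipschitz continuity does not give this. Your remedy, using Step~2 to confine the trajectory to a ball where the constants hold, is circular. Step~2's inequality is valid only while the state is in that ball, and the bootstrap closes only if $W(t_0)$ and $\sup_{[t_0,T]}\lVert\bm{\Xi}\rVert$ are small, which nothing provides.

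In fact no argument can close this, because the statement fails at that generality. Work in polar coordinates with $\dot r=a(\theta)r$ and $\dot\theta=\omega(r,\theta)$. Take $a=1$ for $|\theta-\theta^*|\le0.1$ and $a=-1$ for $|\theta-\theta^*|\ge0.2$, Lipschitz in between. Take $\omega=\max\{\delta(r),\sqrt{|\theta-\theta^*|/0.05}\}$ for $|\theta-\theta^*|\le0.05$ and $\omega=1$ otherwise, with $\delta(r)=\min\{1,1/r\}$.
\begin{itemize}
\item Each revolution lasts at most $2\pi+0.1$, and at most $0.5$ of it is spent where $a>-1$. Hence $\ln r$ drops by more than $5$ per revolution, with overshoot at most $0.5$.
\item So the origin is globally uniformly exponentially stable, and $f$ is locally but not globally Lipschitz.
\item Now let $g=\Xi(t)\,(\chi_2,-\chi_1)^T$, so $\lVert g\rVert=\lVert\bm{\chi}\rVert\,|\Xi|$, with $\Xi(t)=\tfrac12e^{-t/4}$.
\item Then $\dot\theta=\omega-\Xi$ is positive at $\theta^*-0.05$ and equals $\delta(r)-\Xi$ at $\theta^*$. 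So the arc $[\theta^*-0.05,\theta^*]$, on which $a=1$, is forward invariant as long as $\delta(r)<\Xi$.
\item From $r_0=4$ this gives $r=4e^{t}$ and $\delta(r)=e^{-t}/4<\Xi(t)$ for all $t$. Thus $\lVert\bm{\chi}(t)\rVert\to\infty$ while $\Xi\to0$ exponentially.
\end{itemize}

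The global gradient bound, e.g.\ global Lipschitz continuity of $f$ uniformly in $t$, must therefore be added as a hypothesis. This costs the paper nothing. In Theorem~\ref{t2} both nominal systems are linear, with symmetric positive definite matrices $\mathcal{L}_{\bar{\bm{R}}}+\mathcal{B}\otimes\bm{I}_{2\times2}$ and $(\mathcal{L}+\mathcal{B})\otimes\bm{I}_{3\times3}$. There $V=\lVert\bm{\chi}\rVert^2$ gives $c_1=c_2=1$, $c_3=2\lambda_{\min}$ and $c_4=2$ globally.

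Two smaller points. First, $g(t,\bm{0},\bm{\Xi})$ need not vanish, so the origin is not an equilibrium of (\ref{system}). Your estimate therefore carries an additive $\sup\lVert\bm{\Xi}\rVert$ term. It yields boundedness and $\bm{\chi}(t)\to\bm{0}$, which is all Theorem~\ref{t2} uses, but not Lyapunov stability; drop the claim that it ``gives stability''. Second, for a fixed signal $\bm{\Xi}$, the time $T$ does not depend on $t_0$ and $\sup_{[0,T]}\lVert\bm{\Xi}\rVert<\infty$. So no uniform-convergence or integrability assumption on $\bm{\Xi}$ is needed.
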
 
	
	\section{Relative localization}
	\blue{Before tackling CL, we first address pairwise RL between arbitrary neighboring robots, which is key to relaxing the measurement-topology dependence of CL. In this section, a unified data-driven adaptive estimator is proposed to achieve RL between neighboring robots.}
	
	\subsection{Unified relative localization model}
	\label{section model}
	
	\blue{
		Recall (\ref{initial}), since displacements $\bm{z}^{\mathcal{O}_{i}}_{i}$, $\bm{\bm{R}}^{\mathcal{O}_{i}}_{\Sigma_{i}}$, $\bm{z}^{\mathcal{O}_{j}}_{j}$ and  $\bm{\bm{R}}^{\mathcal{O}_{j}}_{\Sigma_{j}}$ are available via odometry and communication. 
		To estimate the real-time relative pose $\bm{p}^{\Sigma_{i}}_{ij}$ and $\bm{\bm{R}}_{\Sigma_{i}}^{\Sigma_{j}}$, the initial relative position $\bm{p}^{\mathcal{O}_{i}}_{ij}(t_{0})$ and initial  rotation matrix $\bm{\bm{R}}^{\mathcal{O}_{j}}_{\mathcal{O}_{i}}$ from $\mathcal{O}_{i}$ to $\mathcal{O}_{j}$ $\bm{\bm{R}}^{\mathcal{O}_{j}}_{\mathcal{O}_{i}}$ should be estimated. }
	In other words, each robot needs to estimate the initial relative state $\bm{\Theta}^{*}_{ij}(t_{0}) = [\bm{p}^{\mathcal{O}_{i}}_{ij}(t_{0})^T, \cos \theta^{\mathcal{O}_{j}}_{\mathcal{O}_{i}}(t_{0}), \sin \theta^{\mathcal{O}_{j}}_{\mathcal{O}_{i}}(t_{0})]$ with the measurements $m_{ij}^{\Sigma_{i}}(t_{k})$ and $m_{ji}^{\Sigma_{j}}(t_{k})$ measured in frame $\Sigma_{i}$ and $\Sigma_{j}$ at time instant $t_{k}$, respectively.
	where the sampling time instant $t_{k}$ is defined as $t_{k} \triangleq t_{0} + k \Delta t, k \in \mathbb{N}$.
	A portion of $\bm{\Theta}_{ij}^{*}(t_{0})$ can be directly calculated from the initial relative measurements $m_{ij}^{\mathcal{O}_{i}}(t_{0})$ and $m_{ji}^{\mathcal{O}_{j}}(t_{0})$. The remaining part, denoted as the unknown state $\bm{\Theta}_{ij}(t_{0})$, is the key to fully determining the initial relative state. Specifically, $\bm{\Theta}_{ij}^{*}(t_{0})$ is uniquely computable if and only if $\bm{\Theta}_{ij}(t_{0})$ can be estimated. This relationship is governed by the following equation:
	\begin{align}
		\label{relationship}
		\bm{\Theta}_{ij}^{*}(t_{0}) = H_{ij}(\bm{\Theta}_{ij}(t_{0}), m_{ij}^{\mathcal{O}_{i}}(t_{0}), m_{ji}^{\mathcal{O}_{j}}(t_{0})),
	\end{align}
	where $H_{ij}(.)$ is a known function and is globally Lipschitz continuous to $\bm{\Theta}_{ij}(t_{0})$.
	To estimate the unknown parameter $\bm{\Theta}_{ij}(t_{0})$, we aim to construct the following linear observation equation for an arbitrary pair of neighboring robots.
	\begin{align}
		\label{rl_model}
		y_{ij}(t_{k}) =  \bm{\Phi}_{ij}^{\mathcal{O}_{i}}(t_{k})^{T} \bm{\Theta}_{ij}(t_{0}),
	\end{align}
	where $y_{ij}(t_{k})$ and $\bm{\Phi}_{ij}^{\mathcal{O}_{i}}(t_{k})$ are obtainable and can be calculated with the relative sensor and odometer measurements.
	\blue{For neighboring robots equipped with different sensing modalities, the explicit mathematical form of \eqref{rl_model} varies accordingly. Details are provided in the next section.}
	
	\subsection{Observation equation under different sensor combinations}
	\begin{figure}[!t]\centering
		\includegraphics[scale=1.11]{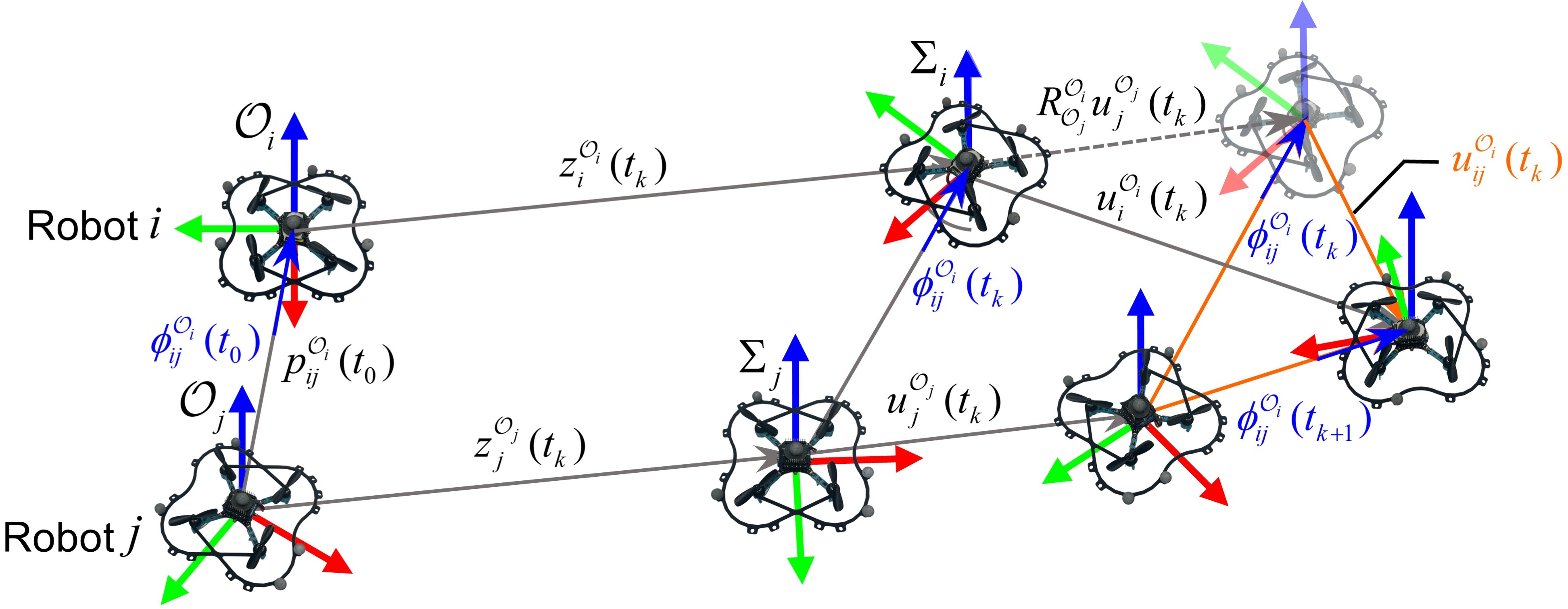}
		\caption{Geometric relationship between the displacement and unidirectional bearing measurements of the two robots.}\centering
		\label{rl_figure}
		\vspace{-0.5cm}
	\end{figure}
	
	The relative measurements $m_{ij}^{\Sigma_{i}}$ and $m_{ji}^{\Sigma_{j}}$ between neighboring robots $i$ and $j$ may be different. 
	The three configurations in which only one robot has a sensor are a subset of the six configurations in which both robots are equipped. 
	Consequently, the measurement data observed in those single-sensor cases are contained within the richer measurement sets of the dual-sensor cases. Therefore, analyzing the observation equations for those three single-sensor cases provides a basis for handling all remaining configurations.
	Without loss of generality, let robot $j$ be a sensorless robot, the relative measurement of robot $i$ for $j$ at time instant $t_{k}$ is $m_{ij}^{\Sigma_{i}}(t_{k})$.
	
	\textbf{Case 1}: $m_{ij}^{\Sigma_{i}} = \bm{\phi}_{ij}^{\Sigma_{i}}$ and $m_{ji}^{\Sigma_{j}} = \varnothing$
	
	With the initial bearing information $\bm{\phi}_{ij}^{\Sigma_{i}}(t_{0})$ obtainable, it can be easily seen that the unknown parameter $\bm{\Theta}_{ij}(t_{0})$ is $\bm{\Theta}_{ij}(t_{0}) = [d_{ij}(t_{0}), \cos \theta^{\mathcal{O}_{j}}_{\mathcal{O}_{i}}(t_{0}), \sin \theta^{\mathcal{O}_{j}}_{\mathcal{O}_{i}}(t_{0}) ]^T$. 
	The initial relative state $\bm{\Theta}_{ij}^{*}(t_{0})$ can be calculated as $\bm{\Theta}_{ij}^{*}(t_{0}) =  [\left(\bm{\phi}_{ij}^{\mathcal{O}_{i}}d_{ij}(t_{0})\right)^T, \cos \theta^{\mathcal{O}_{j}}_{\mathcal{O}_{i}}(t_{0}), \sin \theta^{\mathcal{O}_{j}}_{\mathcal{O}_{i}}(t_{0}) ]^{T}$, which is globally Lipschitz continuous respect to $\bm{\Theta}_{ij}(t_{0})$.
	Next, we construct the observation equation (\ref{rl_model}) under the single bearing measurement case.
	Since $\bm{\bm{R}}^{\Sigma_{i}}_{\mathcal{O}_{i}}$ is available, $\bm{\phi}_{ij}^{\mathcal{O}_{i}}(t_{k}) = \left(\bm{\bm{R}}^{\Sigma_{i}}_{\mathcal{O}_{i}}\right)^{T} \bm{\phi}_{ij}^{\Sigma_{i}}(t_{k})$ is also available and we do not distinguish them below.
	Define the translational displacement of robot $i$ from $t_{k}$ to $t_{k+1}$ in its odometer frame $\mathcal{O}_{i}$ as $u_{i}^{\mathcal{O}_{i}}$, as shown in Fig. \ref{rl_figure}. 
	Adopt the sine theorem to the orange triangle in Fig. \ref{rl_figure}, the bearing measured by robot $i$ from two adjacent samples $\bm{\phi}_{ij}^{\mathcal{O}_{i}}(t_{k})$ and $\bm{\phi}_{ij}^{\mathcal{O}_{i}}(t_{k+1})$ satisfy
	\begin{align}
		\label{geometric_relationship}
		d_{ij}(t_{k}) \left( \bm{\phi}_{ij}^{\mathcal{O}_{i}}(t_{k}) \times \bm{\phi}_{ij}^{\mathcal{O}_{i}}(t_{k+1}) \right) = - \bm{u}_{ij}^{\mathcal{O}_{i}} \times \bm{\phi}_{ij}^{\mathcal{O}_{i}}(t_{k+1}),
	\end{align}
	where $\bm{u}_{ij}^{\mathcal{O}_{i}} = \left( \bm{u}_{i}^{\mathcal{O}_{i}}(t_{k}) - \left( \bm{\bm{R}}_{\mathcal{O}_{i}}^{\mathcal{O}_{j}} \right)^T \bm{u}_{j}^{\mathcal{O}_{j}}(t_{k})\right)$ is the relative translational displacement between robots $i$ and $j$ from $t_{k}$ to $t_{k+1}$. According to (\ref{initial}), $d_{ij}(t_{k})$ satisfies
	\begin{align}
		\label{distance}
		\bm{p}_{ij}^{\mathcal{O}_{i}} = d_{ij}(t_{0}) \bm{\phi}_{ij}^{\mathcal{O}_{i}}(t_{0}) + \bm{z}^{\mathcal{O}_{i}}_{i}(t_{k}) - \left( \bm{\bm{R}}_{\mathcal{O}_{i}}^{\mathcal{O}_{j}} \right)^T
		\bm{z}^{\mathcal{O}_{j}}_{j}(t_{k}),
	\end{align}
	where $\bm{p}_{ij}^{\mathcal{O}_{i}} = d_{ij}(t_{k}) \bm{\phi}_{ij}^{\mathcal{O}_{i}}(t_{k})$. Multipling Eq.(\ref{geometric_relationship}) by vector $\bm{\phi}_{ij}^{\mathcal{O}_{i}}(t_{k})$ after converting both sides to rank, it has
	\begin{align}
		\label{distance1}
		\notag
		&\bm{a}_{ij}(t_{k})^{T} d_{ij}(t_{k}) \bm{\phi}_{ij}^{\mathcal{O}_{i}}(t_{k}) + \left( \bm{u}_{i}^{\mathcal{O}_{i}} \times \bm{\phi}_{ij}^{\mathcal{O}_{i}}(t_{k+1}) \right)^{T} \bm{\phi}_{ij}^{\mathcal{O}_{i}}(t_{k}) 
		\\
		&- \left( \left( \left( \bm{\bm{R}}_{\mathcal{O}_{i}}^{\mathcal{O}_{j}} \right)^T \bm{u}_{j}^{\mathcal{O}_{j}}(t_{k}) \right)  \times \bm{\phi}_{ij}^{\mathcal{O}_{i}}(t_{k+1})\right)^{T} \bm{\phi}_{ij}^{\mathcal{O}_{i}}(t_{k}) = 0,
	\end{align}
	where $\bm{a}_{ij}(t_{k}) \triangleq \left( \bm{\phi}_{ij}^{\mathcal{O}_{i}}(t_{k}) \times \bm{\phi}_{ij}^{\mathcal{O}_{i}}(t_{k+1}) \right)$ is measurable. Substituting Eq.(\ref{distance}) to Eq.(\ref{distance1}), it has
	\begin{align}
		\label{distance2}
		\notag
		&\bm{a}_{ij}(t_{k})^{T} \bm{z}^{\mathcal{O}_{i}}_{i}(t_{k}) + \left( \bm{u}_{i}^{\mathcal{O}_{i}}(t_{k}) \times \bm{\phi}_{ij}^{\mathcal{O}_{i}}(t_{k+1}) \right)^{T} \bm{\phi}_{ij}^{\mathcal{O}_{i}}(t_{k})  
		\\ 
		\notag
		= & -\bm{a}_{ij}(t_{k})^{T} d_{ij}(t_{0}) \bm{\phi}_{ij}^{\mathcal{O}_{i}}(t_{0}) + \bm{a}_{ij}(t_{k})^{T}\left( \bm{\bm{R}}_{\mathcal{O}_{i}}^{\mathcal{O}_{j}} \right)^T\bm{z}^{\mathcal{O}_{j}}_{j}(t_{k})
		\\ 
		&+ \left( \left( \left( \bm{\bm{R}}_{\mathcal{O}_{i}}^{\mathcal{O}_{j}} \right)^T \bm{u}_{j}^{\mathcal{O}_{j}}(t_{k}) \right)  \times \bm{\phi}_{ij}^{\mathcal{O}_{i}}(t_{k+1})\right)^{T} \bm{\phi}_{ij}^{\mathcal{O}_{i}}(t_{k}).
	\end{align}
	Define $\bm{a}_{ij} = [a_{ij,1}, a_{ij,2}, a_{ij,3}] \triangleq [\bm{a}_{ij,h}^T, a_{ij,3}]^T$. Similarly, take $\bm{z}_{j}^{\mathcal{O}_{j}} \triangleq [\left( \bm{z}_{j,h}^{\mathcal{O}_{j}} \right)^{T}, z_{j,3}^{\mathcal{O}_{j}}]^T$
	$\bm{\phi}_{ij}^{\mathcal{O}_{i}} \triangleq [\left( \bm{\phi}_{ij,h}^{\mathcal{O}_{i}} \right)^T, \phi_{ij,3}^{\mathcal{O}_{i}}]^T$ 
	and $\bm{u}_{j}^{\mathcal{O}_{j}} \triangleq [\left( \bm{u}_{j,h}^{\mathcal{O}_{j}} \right)^{T}, u_{j,3}^{\mathcal{O}_{j}}]^T$, then one can check that 
	\begin{align}
		\label{caculate}
		&\bm{a}_{ij}(t_{k})^{T} \left( \bm{\bm{R}}_{\mathcal{O}_{i}}^{\mathcal{O}_{j}} \right)^T \bm{z}^{\mathcal{O}_{j}}_{j}(t_{k}) = a_{ij,3}(t_{k}) z_{j,3}^{\mathcal{O}_{j}}(t_{k}) \\
		\notag
		& + [\bm{a}_{ij,h}(t_{k})^T \bm{z}_{j,h}^{\mathcal{O}_{j}}(t_{k}), -\bm{a}_{ij,h}(t_{k})^T \times \bm{z}_{j,h}^{\mathcal{O}_{j}}(t_{k})] \begin{bmatrix}
			\cos \theta^{\mathcal{O}_{j}}_{\mathcal{O}_{i}}(t_{0}) \\ \sin \theta^{\mathcal{O}_{j}}_{\mathcal{O}_{i}}(t_{0})
		\end{bmatrix} \\
		\notag
		&\left( \left( \left( \bm{\bm{R}}_{\mathcal{O}_{i}}^{\mathcal{O}_{j}} \right)^T \bm{u}_{j}^{\mathcal{O}_{j}}(t_{k}) \right)  \times \bm{\phi}_{ij}^{\mathcal{O}_{i}}(t_{k+1})\right)^{T} \bm{\phi}_{ij}^{\mathcal{O}_{i}}(t_{k}) = -a_{ij,3} u_{j,3}^{\mathcal{O}_{j}}\\ 
		\notag
		& + [-\bm{u}_{j,h}^{\mathcal{O}_{j}}(t_{k})^T \bm{a}_{ij}(t_{k}), -\bm{u}_{j,h}^{\mathcal{O}_{j}}(t_{k}) \times \bm{a}_{ij}(t_{k})]\begin{bmatrix}
			\cos \theta^{\mathcal{O}_{j}}_{\mathcal{O}_{i}}(t_{0}) \\ \sin \theta^{\mathcal{O}_{j}}_{\mathcal{O}_{i}}(t_{0})
		\end{bmatrix}.
	\end{align}
	Substituting Eq.(\ref{caculate}) to Eq.(\ref{distance2}), one can see 
	\begin{align}
		\label{model} 
		&\bar{y}_{ij}(t_{k}) \triangleq \left( \bm{u}_{i}^{\mathcal{O}_{i}}(t_{k}) \times \bm{\phi}_{ij}^{\mathcal{O}_{i}}(t_{k+1}) \right)^{T} \bm{\phi}_{ij}^{\mathcal{O}_{i}}(t_{k}) 
		\\
		\notag
		& + \bm{a}_{ij}(t_{k})^{T} \bm{z}^{\mathcal{O}_{i}}_{i}(t_{k}) - a_{ij,3}(t_{k}) z_{j,3}^{\mathcal{O}_{j}}(t_{k}) - a_{ij,3} u_{j,3}^{\mathcal{O}_{j}} 
		\\
		\notag 
		= & [\psi_{ij,1}(t_{k}), \psi_{ij,2}(t_{k}), \psi_{ij,3}(t_{k})]^{T} \bm{\Theta}_{ij}(t_{0})
		\triangleq \bm{\Psi}_{ij}(t_{k})^T \bm{\Theta}_{ij}(t_{0}), 
	\end{align}
	where 
	$\psi_{ij,1}(t_{k}) =  -\bm{a}_{ij}(t_{k})^{T} \bm{\phi}_{ij}^{\mathcal{O}_{i}}(t_{0})$, $\psi_{ij,2}(t_{k}) =\bm{a}_{ij,h}(t_{k})^T \bm{z}_{j,h}^{\mathcal{O}_{j}}(t_{k})-\bm{u}_{j,h}^{\mathcal{O}_{j}}(t_{k})^T \bm{a}_{ij,h}(t_{k})$ and $\psi_{ij,3}(t_{k}) =-\bm{u}_{j,h}^{\mathcal{O}_{j}}(t_{k}) \times \bm{a}_{ij,h}(t_{k})-\bm{a}_{ij,h}(t_{k})^T \times \bm{z}_{j,h}^{\mathcal{O}_{j}}(t_{k})$.
	Note that for vector $\bm{x}\in \mathbb{R}^{2}$, $\bm{y}\in \mathbb{R}^{2}$, $\bm{x} \times \bm{y} = x[2]y[1]-x[1]y[2]$.
	In Eq.(\ref{model}), one can see that both $\bar{y}_{ij}(t_{k})$ and $\bm{\Psi}_{ij}(t_{k})$ are available to robot $i$ at time instant $t_{k}$. 
	The RL problem between robots $i$ and $j$ is transformed into a parameter estimation problem for $\bm{\Theta}_{ij}(t_{0})$. Define $y_{ij}(t_{k}) \triangleq \frac{\bar{y}_{ij}(t_{k})}{\lVert \bm{\Psi}_{ij}(t_{k}) \rVert}$ and normalizing Eq.(\ref{model}) as
	\begin{align}
		y_{ij}(t_{k}) = \frac{\bm{\Psi}_{ij}(t_{k})}{\lVert \bm{\Psi}_{ij}(t_{k})^{T} \rVert} \bm{\Theta}_{ij}(t_{0}) \triangleq \bm{\Phi}_{ij}^{\mathcal{O}_{i}}(t_{k})^{T} \bm{\Theta}_{ij}(t_{0}),
	\end{align}
	which has the same structure as the unified linear model (\ref{rl_model})
	
	\textbf{Case 2}: $m_{ij}^{\Sigma_{i}} = \bm{p}_{ij}^{\Sigma_{i}}$ and $m_{ji}^{\Sigma_{j}} = \varnothing$
	
	Similarly to the bearing case, with the relative position $\bm{p}_{ij}^{\mathcal{O}_{i}}$ in frame $\mathcal{O}_{i}$ available, the unknown parameter that needs to be estimated is $\bm{\Theta}_{ij}(t_{0}) = [\cos \theta^{\mathcal{O}_{j}}_{\mathcal{O}_{i}}(t_{0}), \sin \theta^{\mathcal{O}_{j}}_{\mathcal{O}_{i}}(t_{0})]^T$.
	The initial relative state $\bm{\Theta}_{ij}^{*}(t_{0})$ can be directly represented as $\bm{\Theta}_{ij}^{*}(t_{0}) =  [\left(\bm{p}_{ij}^{\mathcal{O}_{i}}\right)^T, \cos \theta^{\mathcal{O}_{j}}_{\mathcal{O}_{i}}(t_{0}), \sin \theta^{\mathcal{O}_{j}}_{\mathcal{O}_{i}}(t_{0}) ]^{T}$, which is globally Lipschitz continuous respect to $\bm{\Theta}_{ij}(t_{0})$.
	With $\bm{\bm{R}}_{\Sigma_{i}}^{\mathcal{O}_{i}}$ is available, $\bm{p}_{ij}^{\Sigma_{i}}(t_{k}) = \left(\bm{\bm{R}}_{\Sigma_{i}}^{\mathcal{O}_{i}}\right)^{T} \bm{p}_{ij}^{\mathcal{O}_{i}}(t_{k})$ is also available. 
	One can see the following relationship is established:
	\begin{align}
		\notag
		\left(\bm{\bm{R}}_{\mathcal{O}_{i}}^{\mathcal{O}_{j}}\right)^T \bm{u}_{j}^{\mathcal{O}_{j}}(t_{k}) = \left(\bm{p}_{ij}^{\mathcal{O}_{i}}(t_{k}) +  \bm{u}_{i}^{\mathcal{O}_{i}}(t_{k}) - \bm{p}_{ij}^{\mathcal{O}_{i}}(t_{k+1})\right).
	\end{align}
	Let $\bm{p}_{ij}^{\mathcal{O}_{i}} \triangleq [\left(\bm{p}_{ij,h}^{\mathcal{O}_{i}}\right)^{T}, p_{ij,3}^{\mathcal{O}_{i}}]^{T} $ and $\bm{u}_{j}^{\mathcal{O}_{j}} \triangleq [\left(\bm{u}_{j,h}^{\mathcal{O}_{j}}\right)^{T}, u_{j,3}^{\mathcal{O}_{j}}]^{T} $.
	One can check that
	\begin{align}
		y_{ij}(t_{k}) = &\frac{\bm{\Psi}_{ij}(t_{k})}{\lVert \bm{\Psi}_{ij}(t_{k})^{T} \rVert} \bm{\Theta}_{ij}(t_{0}) \triangleq \bm{\Phi}_{ij}^{\mathcal{O}_{i}}(t_{k})^{T} \bm{\Theta}_{ij}(t_{0}).
	\end{align}
	where $y_{ij}(t_{k}) \triangleq \frac{\bar{y}_{ij}(t_{k})}{\lVert \bm{\Psi}_{ij}(t_{k}) \rVert}$, $\bm{\Psi}_{ij}(t_{k}) \triangleq [\bm{p}_{ij,h}^{\mathcal{O}_{i}}(t_{k})^T \bm{u}_{j,h}^{\mathcal{O}_{j}}(t_{k}), \bm{p}_{ij,h}^{\mathcal{O}_{i}}(t_{k})^T \times \bm{u}_{j,h}^{\mathcal{O}_{j}}(t_{k})]^{T}$ and $\bar{y}_{ij}(t_{k}) \triangleq \bm{p}_{ij}^{\mathcal{O}_{i}}(t_{k})^{T}\left(\bm{p}_{ij}^{\mathcal{O}_{i}}(t_{k}) +  \bm{u}_{i}^{\mathcal{O}_{i}}(t_{k}) - \bm{p}_{ij}^{\mathcal{O}_{i}}(t_{k+1})\right)
	-p_{ij,3}^{\mathcal{O}_{i}}(t_{k}) u_{j,3}^{\mathcal{O}_{j}}(t_{k})$, 
	which has the same structure with (\ref{rl_model})
	
	\textbf{Case 3}: $m_{ij}^{\Sigma_{i}} = d_{ij}$ and $m_{ji}^{\Sigma_{j}} = \varnothing$
	
	Since distance measurement $d_{ij}$ is a scalar, there is no difference between measurements taken by one neighbor and measurements taken by two neighbors.
	The unknown parameter $\bm{\Theta}_{ij}(t_{0})$ that needs to be estimated is equal to the initial relative state $\bm{\Theta}_{ij}^{*}(t_{0})$, i.e. $\bm{\Theta}_{ij}^{*}(t_{0}) = \bm{\Theta}_{ij}(t_{0})$, which is globally Lipschitz continuous respect to $\bm{\Theta}_{ij}(t_{0})$.
	In such measurement situations, a linear observation equation (\ref{rl_model}) can also be constructed based on distance measurement and odometer measurement information. 
	The observation equation for this situation can be found in \cite{ze2024relative} and is elaborated here.
	
	For the other 6 measurement cases, at least one unified observation equation can be constructed based on the above 3 basic scenarios. 
	In the next section, a data-driven observer is designed to accomplish the RL task between neighbors.
	
	\subsection{Data-driven relative localization}
	\label{RL}
	For robot $i$, define $\hat{\bm{\Theta}}_{ij}$ as an estimation of parameter $\bm{\Theta}_{ij}(t_{0})$ and $\tilde{\bm{\Theta}}_{ij} = \hat{\bm{\Theta}}_{ij} -\bm{\Theta}_{ij}(t_{0})$ as the estimation error. 
	Define the following innovation $\nu_{ij}(t_{k})$
	\begin{align}
		\notag
		\nu_{ij}(t_{k}) = \hat{\bm{\Theta}}_{ij}^{T} \bm{\Phi}_{ij}^{\mathcal{O}_{i}}(t_{k}) - y_{ij}(t_{k}) = \tilde{\bm{\Theta}}_{ij}^{T} \bm{\Phi}_{ij}^{\mathcal{O}_{i}}(t_{k}).
	\end{align}
	Based on the reltive localization model built in Eq.(\ref{rl_model}), design the  following adaptive RL estimator $\hat{\bm{\Theta}}_{ij}$ to estimate $\bm{\Theta}_{ij}(t_{0})$
	\begin{align}
		\label{relative estimator}
		\notag
		&\hat{\mathbf{\Theta}}_{ij}(t) = \hat{\mathbf{\Theta}}_{ij}(t_{k}), \quad t_{k} < t < t_{k+1} \\ 
		&\hat{\mathbf{\Theta}}_{ij}(t_{k+1}) = \hat{\mathbf{\Theta}}_{ij}(t_{k}) - \zeta \sum_{m=1}^{\varsigma} \mathbf{\Phi}_{ij}(t_{m}) \nu_{ij}(t_{m}) \\ \notag
		&- \zeta \mathbf{\Phi}_{ij}(t_{k}) \nu_{ij}(t_{k}),
	\end{align}
	where $\zeta$ is designed as $\zeta = \frac{\lambda_{\mathrm{min}}(\mathcal{S}_{ij})}{\left(\lambda_{\mathrm{max}}(\mathcal{U}_{ij}(t_{k})) + \lambda_{\mathrm{max}}(\mathcal{S}_{ij})\right)^2}$, 
	with current data matrix $\mathcal{U}_{ij}(t_{k}) = \bm{\Phi}_{ij}(t_{k})\bm{\Phi}_{ij}(t_{k})^{T}$ and recorded data matrix $\mathcal{S}_{ij} = \sum_{m=1}^{\varsigma} \bm{\Phi}_{ij}(t_{m})\bm{\Phi}_{ij}(t_{m})^{T}$.
	Design the initial relative pose estimator as 
	\begin{align}
		\label{rl_estimator}
		\hat{\bm{\Theta}}^{*}_{ij}(t_{k})=H_{ij}(\bm{\Theta}_{ij}(t_{k}), m_{ij}^{\mathcal{O}_{i}}(t_{0}), m_{ji}^{\mathcal{O}_{j}}(t_{0})).
	\end{align}
	Recall that $m_{ij}^{\mathcal{O}_{i}}(t_{0})$ and $ m_{ji}^{\mathcal{O}_{j}}(t_{0})$ are initial relative measurement information in frame $\mathcal{O}_{i}$ and $\mathcal{O}_{j}$. 
	Define the relative pose estimated error as $\tilde{\bm{\Theta}}^{*}_{ij} = \hat{\bm{\Theta}}^{*}_{ij} - \bm{\Theta}^{*}_{ij}$.
	The convergence of the estimated error $\tilde{\bm{\Theta}}^{*}_{ij}$ can be guaranteed with the following full rank condition on the recorded data matrix $\mathcal{S}_{ij}$.
	\begin{assumption}
		\label{a1}
		The recorded data matrix $\mathcal{S}_{ij}$ is full rank.
	\end{assumption}
	
	Assumption \ref{a1} indicates that the data matrix $\mathcal{S}_{ij}$ contains sufficient information to accomplish the RL task. This is a common assumption in adaptive parameter estimation methods. 
	It should be noted that how to design the motion trajectory $u_{i}^{\mathcal{O}_{i}}$ of each robot $i$ to make the data matrix $\mathcal{S}_{ij}$ full rank can be accomplished with the existing parameter identification results such as \cite{ze2025tase}, which is not within the scope of this paper.
	
	\begin{theorem}
		\label{t1}
		Under Assumption \ref{a1}, the pose estimation error $\lVert \tilde{\bm{\Theta}}^{*}_{ij} \rVert$ converges to 0 exponentially.
	\end{theorem}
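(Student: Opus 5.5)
Since the innovation satisfies $\nu_{ij}(t_m)=\bm{\Phi}_{ij}(t_m)^T\tilde{\bm{\Theta}}_{ij}$, we can rewrite the estimator \eqref{relative estimator} as a linear time-varying recursion in the error:
\begin{align}
\notag
\tilde{\bm{\Theta}}_{ij}(t_{k+1}) = \left(I - \zeta\left(\mathcal{S}_{ij} + \mathcal{U}_{ij}(t_k)\right)\right)\tilde{\bm{\Theta}}_{ij}(t_k) \triangleq \bm{M}_k \tilde{\bm{\Theta}}_{ij}(t_k).
\end{align}
Here $\bm{\Theta}_{ij}(t_0)$ is constant and $\mathcal{S}_{ij}$ is fixed once the data are recorded. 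We then take the Lyapunov function $V_k=\tilde{\bm{\Theta}}_{ij}(t_k)^T\tilde{\bm{\Theta}}_{ij}(t_k)$ and aim to show $V_{k+1}\le \rho V_k$ for a uniform $\rho<1$.

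Write $\bm{A}_k=\mathcal{S}_{ij}+\mathcal{U}_{ij}(t_k)$, which is symmetric positive semidefinite. Assumption~\ref{a1} gives $\lambda_{\min}(\bm{A}_k)\ge\lambda_{\min}(\mathcal{S}_{ij})\triangleq s>0$, and $\lambda_{\max}(\bm{A}_k)\le \lambda_{\max}(\mathcal{U}_{ij}(t_k))+\lambda_{\max}(\mathcal{S}_{ij})\triangleq L_k$. Expanding,
\begin{align}
\notag
V_{k+1}=V_k-2\zeta\tilde{\bm{\Theta}}^T\bm{A}_k\tilde{\bm{\Theta}}+\zeta^2\tilde{\bm{\Theta}}^T\bm{A}_k^2\tilde{\bm{\Theta}}\le\left(1-2\zeta s+\zeta^2L_k^2\right)V_k .
\end{align}
With $\zeta=s/L_k^2$, this becomes $V_{k+1}\le(1-s^2/L_k^2)V_k$.

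To make the rate uniform, note that the normalization $\bm{\Phi}_{ij}=\bm{\Psi}_{ij}/\lVert\bm{\Psi}_{ij}\rVert$ gives $\lVert\bm{\Phi}_{ij}(t_k)\rVert=1$. Hence $\lambda_{\max}(\mathcal{U}_{ij}(t_k))\le 1$ and $L_k\le 1+\lambda_{\max}(\mathcal{S}_{ij})$. Also $0<s\le L_k$, so the factor lies in $[0,1)$. Therefore
\begin{align}
\notag
\rho=1-\frac{s^2}{(1+\lambda_{\max}(\mathcal{S}_{ij}))^2}\in[0,1),
\end{align}
and $\lVert\tilde{\bm{\Theta}}_{ij}(t_k)\rVert\le\rho^{k/2}\lVert\tilde{\bm{\Theta}}_{ij}(t_0)\rVert$. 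Because the estimate is held constant between samples, this yields exponential decay in continuous time as well, with rate $-\ln\rho/(2\Delta t)$.

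Finally, we transfer the bound to $\tilde{\bm{\Theta}}^*_{ij}$. Reading \eqref{rl_estimator} with the estimate $\hat{\bm{\Theta}}_{ij}(t_k)$ as its argument, both $\hat{\bm{\Theta}}^*_{ij}$ and $\bm{\Theta}^*_{ij}$ come from $H_{ij}$ with the same measured arguments $m_{ij}^{\mathcal{O}_i}(t_0)$ and $m_{ji}^{\mathcal{O}_j}(t_0)$. Global Lipschitz continuity of $H_{ij}$ then gives $\lVert\tilde{\bm{\Theta}}^*_{ij}\rVert\le \ell_H\lVert\tilde{\bm{\Theta}}_{ij}\rVert$, which decays exponentially.

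The main obstacle is the case where $\bm{\Psi}_{ij}(t_k)=0$, so the normalization is undefined. I would handle it by setting $\bm{\Phi}_{ij}(t_k)=0$ at such instants. The contraction then still holds, since the estimate only uses $\lambda_{\min}(\mathcal{S}_{ij})$, and the bound $\lVert\bm{\Phi}_{ij}\rVert\le1$ is preserved. A second point to state explicitly is that $\mathcal{S}_{ij}$ is built from a finite recorded set of data collected before the update law runs, so that it is constant and Assumption~\ref{a1} applies at every step $k$.
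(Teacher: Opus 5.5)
Your proposal is correct and follows essentially the same route as the paper: the Lyapunov function $V=\tilde{\bm{\Theta}}_{ij}^T\tilde{\bm{\Theta}}_{ij}$, the bound $\Delta V\le\left(\zeta^2L_k^2-2\zeta\lambda_{\min}(\mathcal{S}_{ij})\right)V$, the step size $\zeta$ giving the contraction factor $1-\lambda_{\min}(\mathcal{S}_{ij})^2/L_k^2$, the uniform rate from $\lVert\bm{\Phi}_{ij}\rVert\le 1$, and the transfer to $\tilde{\bm{\Theta}}^{*}_{ij}$ via the global Lipschitz property of $H_{ij}$. The paper leaves implicit the explicit error recursion, the Weyl-type eigenvalue justification, the treatment of $\bm{\Psi}_{ij}(t_k)=0$ (where you should also set $y_{ij}(t_k)=0$), and the continuous-time rate, so these are useful clarifications; note also that $\rho$ is strictly positive, so in fact $\rho\in(0,1)$.
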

	
	\begin{proof}
		Choose a Lyapunov candidate as $V_{ij}(t_{k}) = \tilde{\mathbf{\Theta}}_{ij}(t_{k})^T\tilde{\mathbf{\Theta}}_{ij}(t_{k}).$ and $\Delta V_{ij} = V_{ij}(t_{k+1}) - V_{ij}(t_{k})$ satisfies
		\begin{align}
			\label{d_V}
			\Delta V_{ij} \leq &\zeta^2 \left( \lambda_{\max}(\mathcal{U}_{ij}(t_{k})) + \lambda_{\max}(\mathcal{S}_{ij}) \right)^2 \lVert \tilde{\mathbf{\Theta}}_{ij}(t_{k}) \rVert^2 \\
			\notag
			&- 2 \zeta \lambda_{\min}(\mathcal{S}_{ij}) \lVert \tilde{\mathbf{\Theta}}_{ij}(t_{k}) \rVert^2.
		\end{align}
		Substituting $\zeta$ into Eq.(\ref{d_V}), one can see $\Delta V_{ij} \leq \frac{-\lambda_{\min}(\mathcal{S}_{ij})^2}{(\lambda_{\max}(\mathcal{U}_{ij}(t_{k})) + \lambda_{\max}(\mathcal{S}_{ij})) ^2} \lVert $.
		It can be check that the maximum eigenvalue of matrix $\mathcal{U}_{ij}(t_{k}) = u_{ij}(t_{k})u_{ij}(t_{k})^T \in \mathbb{R}^3$ satisfies $\lambda_{\max}(\mathcal{U}_{ij}(t_{k})) =  \mathbf{\Phi}_{ij}(t_{k})^{T} \mathbf{\Phi}_{ij}(t_{k}) \leq 1$, and the RL error $\tilde{\mathbf{\Theta}}_{ij}$ satisfies $\lVert \tilde{\mathbf{\Theta}}_{ij}(t_{k}) \rVert \leq \left( \sqrt{\left( 1-\frac{\lambda_{\min}(\mathcal{S}_{ij})^2}{(1 + \lambda_{\max}(\mathcal{S}_{ij})) ^2} \right)} \right)^{k} \lVert \tilde{\mathbf{\Theta}}_{ij}(t_{0}) \rVert$, which indicates that $\tilde{\mathbf{\Theta}}_{ij}$ is global exponentially stable. Since $H_{ij}(.)$ is globally Lipschitz continuous respect to $\tilde{\mathbf{\Theta}}_{ij}$, $\tilde{\mathbf{\Theta}}^{*}_{ij}(t_{k}) = \hat{\bm{\Theta}}^{*}_{ij}(t_{k}) - \mathbf{\Theta}^{*}_{ij}$ is also global exponentially stable.
		
	\end{proof}
	
	For robot $i$, redefine the RL estimator (\ref{rl_estimator}) as $\hat{\bm{\Theta}}^{*}_{ij}(t_{k}) \triangleq [\left(\hat{\bm{p}}_{ij,t_{0}}^{\mathcal{O}_{i}}\right)^T, \hat{c}_{ij,t_{0}}, \hat{s}_{ij,t_{0}}]^{T}$ and redefine the pose estimation error as $\tilde{\bm{\Theta}}^{*}_{ij}(t_{k}) \triangleq [\left(\tilde{\bm{p}}_{ij,t_{0}}^{\mathcal{O}_{i}}\right)^T, \tilde{c}_{ij,t_{0}}, \tilde{s}_{ij,t_{0}}]^{T}$.
	\blue{The initial relative pose can be estimated with $\hat{\bm{\bm{R}}}_{\mathcal{O}_{i}}^{\mathcal{O}_{j}} = \bm{R}\left( \hat{c}_{ij,t_{0}}, \hat{s}_{ij,t_{0}} \right)$.}
	For robot $j$, define $\hat{c}_{ji,t_{0}} = \hat{c}_{ij,t_{0}}$, $\hat{s}_{ji,t_{0}} = -\hat{s}_{ij,t_{0}}$ and $\hat{\bm{p}}_{ji,t_{0}}^{\mathcal{O}_{j}} = - \hat{\bm{\bm{R}}}_{\mathcal{O}_{i}}^{\mathcal{O}_{j}} \hat{\bm{p}}_{ij,t_{0}}^{\mathcal{O}_{i}}$, where $\hat{\bm{\bm{R}}}_{\mathcal{O}_{i}}^{\mathcal{O}_{j}} = \bm{R}\left( \hat{c}_{ji,t_{0}}, \hat{s}_{ji,t_{0}} \right)$. 
	Then the RL between robots $i$ and $j$ can be achieved.
	
	\blue{Other relative measurements (e.g., ratio-of-distance or angle \cite{fang20203}) can be integrated into the observation equation of $\bm{\Theta}_{ij}$ via odometer augmentation, yielding a unified method.}
	
	\begin{remark}
		\blue{Different from existing CL methods \cite{xie2023tac, xie2025tro}, which build geometric constraints from relative measurements among multiple neighboring robots, our key idea is to design a data-driven RL estimator \eqref{relative estimator} for each pair of neighboring robots. 
			Implementing CL based on RL over arbitrary neighboring pairs effectively relaxes the stringent requirements on the measurement topology. 
			Moreover, the data-driven design guarantees exponential convergence of the RL estimator \eqref{relative estimator}. 
			As will be shown in Section IV-B, this exponential RL convergence is precisely what ensures the asymptotic convergence of the CL estimator \eqref{cooperative estimator}. Therefore, data plays a crucial role in guaranteeing the convergence of the CL estimator.}
	\end{remark}
	
	\section{Integrated cooperative localization}
	\label{CL}
	\blue{In this section, we present a distributed CL method based on data-driven RL estimator and demonstrate that, under sufficient data collection, the CL estimator converge asymptotically.}
	\subsection{Cooperative localization estimator}
	\blue{The real-time CL estimators $\hat{\bm{p}}^{\Sigma_i}_{i0}$ and
		$\hat{\bm{R}}^{\Sigma_0}_{\Sigma_i}$ are designed as}
	
	\begin{align}
		\label{cooperative estimator}
		\notag
		\hat{\bm{p}}^{\Sigma_{i}}_{i0} &= \left( \bm{\bm{R}}^{\mathcal{O}_{i}}_{\Sigma_{i}} \right)^{T} \left( \left( \hat{\bm{\bm{R}}}_{\mathcal{O}_{i}}^{\mathcal{O}_{0}}\right)^T \hat{\bm{p}}_{i,t_0} + \bm{z}^{\mathcal{O}_{i}}_{i} - \left( \hat{\bm{\bm{R}}}^{\mathcal{O}_{0}}_{\mathcal{O}_{i}} \right)^T \hat{\bm{z}}_{i} \right),\\
		\hat{\bm{\bm{R}}}_{\Sigma_{i}}^{\Sigma_{0}} &= \left(\hat{\bm{\bm{R}}}^{\mathcal{O}_{0}}_{\Sigma_{0}}\right)^T \hat{\bm{\bm{R}}}_{\mathcal{O}_{i}}^{\mathcal{O}_{0}} \bm{\bm{R}}_{\Sigma_{i}}^{\mathcal{O}_{i}},   
	\end{align}
	where $\bm{\bm{R}}_{\Sigma_{i}}^{\mathcal{O}_{i}} $ and $\bm{z}^{\mathcal{O}_{i}}_{i}$ can be measured by odometer of robot $i$.
	\blue{$\hat{\bm{p}}_{i,t_0}$ and $\hat{\bm{\bm{R}}}_{\mathcal{O}_{i}}^{\mathcal{O}_{0}}$ are the distributed coupled estimators which estimate the unknown initial relative pose between robot $i$ and the leade.
		$\hat{\bm{z}}_{i}$ and $\hat{\bm{\bm{R}}}^{\mathcal{O}_{0}}_{\Sigma_{0}}$ are distributed observers which estimates the leader's odometry information.}
	
	\subsubsection{Distributed coupled estimator design}
	\blue{For robot $i$, estimators $\hat{\bm{p}}_{i,t_0}$ and $\hat{\bm{R}}_{\mathcal{O}_{i}}^{\mathcal{O}_{0}} = \bm{R}\left(\hat{c}_{i,t_{0}}, \hat{s}_{i,t_{0}} \right)$ are designed as, }
	\begin{align}
		\label{initial_pose_estimator}
		\notag
		&\dot{\hat{\bm{p}}}_{i,t_{0}} = \sum_{j\in \mathcal{N}_{i}} a_{ij}\left( \hat{\bm{p}}_{j,t_{0}} + \hat{\bm{\bm{R}}}_{\mathcal{O}_{i}}^{\mathcal{O}_{0}} \hat{\bm{p}}^{\mathcal{O}_{i}}_{ij,t_{0}} - \hat{\bm{p}}_{i,t_{0}} \right) + \mu_{i} \left( \hat{\bm{p}}_{i0,t_{0}} - \hat{\bm{p}}_{i,t_{0}} \right) \\
		\notag
		&\dot{\hat{c}}_{i,t_{0}} = \sum_{j\in \mathcal{N}_{i}} a_{ij}\left( \hat{c}_{ij,t_{0}} \hat{c}_{j,t_{0}} - \hat{s}_{ij,t_{0}} \hat{s}_{j,t_{0}} - \hat{c}_{i,t_{0}} \right)
		+ \mu_{i} \left( \hat{c}_{i0,t_{0}} - \hat{c}_{i, t_{0}} \right) \\
		&\dot{\hat{s}}_{i,t_{0}} = \sum_{j\in \mathcal{N}_{i}} a_{ij}\left( \hat{s}_{ij,t_{0}} \hat{c}_{j,t_{0}} + \hat{c}_{ij,t_{0}} \hat{s}_{j,t_{0}} - \hat{s}_{i,t_{0}} \right)
		+ \mu_{i} \left( \hat{s}_{i0,t_{0}} - \hat{s}_{i, t_{0}} \right),
	\end{align}
	where $\hat{\bm{p}}^{\mathcal{O}_{i}}_{ij,t_{0}}$, $\hat{c}_{ij,t_{0}}$ and $\hat{s}_{ij,t_{0}}$ are the RL estimator defined in the last section. $\mathcal{N}_{i}$ is the neighbor set of robot $i$.
	
	\subsubsection{Distributed observer design}
	\blue{For robot $i$ the distributed observer $\hat{\bm{z}}_{i}$ and  $\hat{\bm{\bm{R}}}^{\mathcal{O}_{0}}_{\Sigma_{0}} = \bm{R}(\hat{c}_{i}, \hat{s}_{i})$ are designed as}
	\blue{\begin{align}
			\label{distributed observer}
			&\dot{\hat{\bm{z}}}_{i}(t) = -k_{1} \bm{e}_{i,z} - \frac{\bm{e}_{i,z}^T}{\sqrt{\bm{e}_{i,z}^T\bm{e}_{i,z} + \delta}} \hat{F}_{i,z}, \\ \notag
			&\dot{\hat{F}}_{i,z} = \sum_{j\in \mathcal{N}_{i}} a_{ij}\left( \hat{F}_{i,z} - \hat{F}_{j,z}\right) + \mu_{i} \left( \hat{F}_{i,z} - F_{z} \right), \\ \notag
			&\dot{\hat{c}}_{i}(t) = -k_{1} e_{i,c} - \frac{e_{i,c}}{\sqrt{e_{i,c}^2 + \delta}}, 
			\dot{\hat{s}}_{i}(t) = -k_{1} e_{i,s} - \frac{e_{i,s}}{\sqrt{e_{i,s}^2 + \delta}},
	\end{align}}
	\blue{where $\bm{e}_{i,z} = \sum_{j\in \mathcal{N}_{i}} a_{ij}\left( \hat{\bm{z}}_{i} - \hat{\bm{z}}_{j}\right) + \mu_{i} \left( \hat{\bm{z}}_{i} - \bm{z}^{\mathcal{O}_{0}}_{0}\right)$, $e_{i,c} = \sum_{j\in \mathcal{N}_{i}} a_{ij}\left( \hat{c}_{i} - \hat{c}_{j}\right) + \mu_{i} \left( \hat{c}_{i} - \cos \theta_{\Sigma_{0}}^{\mathcal{O}_{0}}\right) $ and $e_{i,s} = \sum_{j\in \mathcal{N}_{i}} a_{ij}\left( \hat{s}_{i} - \hat{s}_{j}\right) + \mu_{i} \left( \hat{s}_{i} - \sin \theta_{\Sigma_{0}}^{\mathcal{O}_{0}}\right) $ are the consensus error. 
		$F_{z} = v_{\mathrm{max}}$ is the upbound of the first-order derivative of $\bm{z}^{\mathcal{O}_{0}}_{0}$ and $\delta=\beta e^{-\alpha t}$, $\beta, \alpha>0$. To better illustrate the calculation process of the algorithm, we have added pseudocode for the proposed method in Appendix B.}
	
	\begin{remark}
		Different from the existing research results that rely on hierarchical multi-layer graph \cite{xie2023tac}, Henneberg construction \cite{lv2024tii} or other assumptions on the measurement topology \cite{xie2025tro, lcs2024bounded} to ensure that each robot can obtain sufficient measurements from its multiple neighbors, our method accomplish CL task under only weakly connected measurement topology condition, which is also the least restrictive topological condition for CL task.  
		The main reason is that the RL estimator (\ref{relative estimator}) and (\ref{rl_estimator}) are adopted in the CL estimator (\ref{cooperative estimator}), and we do not need the measurements from multiple neighbors.
		Therefore, there are no specific requirements like the number, physical position on the neighboring set of each robot.
	\end{remark}
	
	\subsection{Convergence analysis}  
	Redefine $\hat{\bm{\bm{R}}}_{\Sigma_{i}}^{\Sigma_{0}} = \bm{R}\left( \cos \theta_{{\Sigma}_{i}}^{{\Sigma}_{0}}, \sin \theta_{{\Sigma}_{i}}^{{\Sigma}_{0}} \right)$ and define the closed-loop real-time estimation error as $\tilde{\bm{p}}_{i0} = \hat{\bm{p}}^{\Sigma_{i}}_{i0} - \bm{p}^{\Sigma_{i}}_{i0}$, $\widetilde{\cos \theta_{{\Sigma}_{i}}^{{\Sigma}_{0}}} = \widehat{\cos \theta_{{\Sigma}_{i}}^{{\Sigma}_{0}}} - \cos \theta_{{\Sigma}_{i}}^{{\Sigma}_{0}}$ and $\widetilde{\sin \theta_{{\Sigma}_{i}}^{{\Sigma}_{0}}} = \widehat{\sin \theta_{{\Sigma}_{i}}^{{\Sigma}_{0}}} - \sin \theta_{{\Sigma}_{i}}^{{\Sigma}_{0}}$.
	\blue{Next, we establish the asymptotic convergence of the coupled estimators (i.e. $\hat{\bm{p}}_{i,t_0}$ and $\hat{\bm{R}}_{\mathcal{O}_{i}}^{\mathcal{O}_{0}}$) and the distributed observers (i.e. $\hat{\bm{z}}_{i}$ and $\hat{\bm{\bm{R}}}^{\mathcal{O}_{0}}_{\Sigma_{0}}$), and subsequently prove the convergence of the real-time CL estimators (i.e. $\hat{\bm{p}}^{\Sigma_{i}}_{i0}$ and $\hat{\bm{\bm{R}}}_{\Sigma_{i}}^{\Sigma_{0}}$).}
	
	\subsubsection{Convergence of the distributed coupled estimator}
	\blue{In this section, we establish the asymptotic convergence of the distributed coupled estimators $\hat{\bm{p}}_{i,t_{0}}(t)$, $\hat{c}_{i,t_{0}}(t)$, and $\hat{s}_{i,t_{0}}(t)$. To facilitate the convergence analysis, the following definition and lemma are introduced.}
	
	\begin{definition}
		Define the initial relative yaw angle between robots $i$ and $j$ as $\theta_{\mathcal{O}_{i}}^{\mathcal{O}_{j}}$, take the attitude Laplacian matrix as 
		\begin{align}
			\notag
			\mathcal{L}_{\bar{\bm{R}}} = \begin{bmatrix} 
				&\lvert \mathcal{N}_{1} \rvert \bm{I}_{2 \times 2} &\cdots &-a_{1i} \bar{\bm{\bm{R}}}_{\mathcal{O}_{1}}^{\mathcal{O}_{i}} &\cdots &-a_{1N} \bar{\bm{\bm{R}}}_{\mathcal{O}_{1}}^{\mathcal{O}_{N}} \\
				&\vdots & \ddots &\vdots & \ddots &\vdots \\
				&-a_{i1} \bar{\bm{\bm{R}}}_{\mathcal{O}_{i}}^{\mathcal{O}_{1}} & \ddots &\lvert \mathcal{N}_{i} \rvert \bm{I}_{2 \times 2} & \ddots &-a_{iN} \bar{\bm{\bm{R}}}_{\mathcal{O}_{i}}^{\mathcal{O}_{N}} \\
				&\vdots & \ddots &\vdots & \ddots &\vdots \\
				&-a_{N1} \bar{\bm{\bm{R}}}_{\mathcal{O}_{N}}^{\mathcal{O}_{1}} & \ddots &-a_{Ni} \bar{\bm{\bm{R}}}_{\mathcal{O}_{N}}^{\mathcal{O}_{i}} & \ddots &\lvert \mathcal{N}_{N} \rvert \bm{I}_{2 \times 2} \\
			\end{bmatrix},
		\end{align}
		where $\bar{\bm{\bm{R}}}_{\mathcal{O}_{i}}^{\mathcal{O}_{j}}$ is defined as $\bar{\bm{\bm{R}}}_{\mathcal{O}_{i}}^{\mathcal{O}_{j}}=\begin{bmatrix} &\cos \theta_{\mathcal{O}_{i}}^{\mathcal{O}_{j}}, & -\sin \theta_{\mathcal{O}_{i}}^{\mathcal{O}_{j}} \\ &\sin \theta_{\mathcal{O}_{i}}^{\mathcal{O}_{j}}, & \cos \theta_{\mathcal{O}_{i}}^{\mathcal{O}_{j}} \end{bmatrix}$.
	\end{definition}
	
	\begin{lemma}
		\label{lemma1}
		If the communication topology $\mathcal{G}_{c}=(\mathcal{V},\mathcal{E}_{c})$ is connected (i.e. Assumption \ref{top} holds) and at least one of the robot $i$ is informed, i.e. $\exists i, \mu_{i}=1$, the coupled information matrix $\left( \mathcal{L}_{\bar{\bm{R}}} + \mathcal{B} \otimes \bm{I}_{2 \times 2} \right)$ is positive definite, i.e. $\left( \mathcal{L}_{\bar{\bm{R}}} + \mathcal{B} \otimes \bm{I}_{2 \times 2} \right)>0$.
	\end{lemma}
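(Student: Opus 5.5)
The plan is to show that $\mathcal{L}_{\bar{\bm{R}}}$ is an orthogonal similarity transform of the ordinary Laplacian $\mathcal{L}\otimes\bm{I}_{2\times 2}$, and that the same transform leaves $\mathcal{B}\otimes\bm{I}_{2\times 2}$ unchanged. Positive definiteness then follows directly from Lemma~\ref{connected}. The structural fact used is that the planar relative rotations $\bar{\bm{R}}_{\mathcal{O}_i}^{\mathcal{O}_j}$ between initial odometer frames are not arbitrary. They all derive from a single common reference frame, namely the leader's frame $\mathcal{O}_0$, so they satisfy the cocycle (path-consistency) relation. With the paper's yaw convention ($\theta_{\mathcal{O}_i}^{\mathcal{O}_j}$ is the relative yaw between frames $i$ and $j$) I will write
\[
\bar{\bm{R}}_{\mathcal{O}_i}^{\mathcal{O}_j} = \bm{Q}_i \bm{Q}_j^{T}, \qquad \bm{Q}_i := \bar{\bm{R}}_{\mathcal{O}_i}^{\mathcal{O}_0}\in SO(2).
\]
This holds because planar rotations commute and the yaw angles add: $\theta_{\mathcal{O}_i}^{\mathcal{O}_j}=\theta_{\mathcal{O}_i}^{\mathcal{O}_0}-\theta_{\mathcal{O}_j}^{\mathcal{O}_0}$, or the same with all signs reversed depending on convention. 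In the reversed case I simply use $\bm{Q}_i^{T}$ in place of $\bm{Q}_i$. Checking this sign convention against (\ref{initial}) is the only bookkeeping step that needs care.

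With this in hand, define the block-diagonal orthogonal matrix $\bm{D}=\mathrm{diag}\{\bm{Q}_1,\ldots,\bm{Q}_N\}$. The diagonal blocks of $\mathcal{L}_{\bar{\bm{R}}}$ are $\lvert\mathcal{N}_i\rvert\bm{I}_{2\times 2}$, and these equal $\bm{Q}_i(\lvert\mathcal{N}_i\rvert\bm{I}_{2\times 2})\bm{Q}_i^{T}$. Here $\lvert\mathcal{N}_i\rvert$ should be understood as $\sum_j a_{ij}$ over followers, matching the diagonal of $\mathcal{L}$. The off-diagonal blocks are $-a_{ij}\bm{Q}_i\bm{Q}_j^{T}$. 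Hence
\[
\mathcal{L}_{\bar{\bm{R}}}=\bm{D}\,(\mathcal{L}\otimes\bm{I}_{2\times 2})\,\bm{D}^{T}.
\]
Since each block of $\mathcal{B}\otimes\bm{I}_{2\times 2}$ is a scalar multiple $\mu_i\bm{I}_{2\times 2}$ of the identity, it also satisfies $\bm{D}(\mathcal{B}\otimes\bm{I}_{2\times 2})\bm{D}^{T}=\mathcal{B}\otimes\bm{I}_{2\times 2}$. Combining the two,
\[
\mathcal{L}_{\bar{\bm{R}}}+\mathcal{B}\otimes\bm{I}_{2\times 2}=\bm{D}\big((\mathcal{L}+\mathcal{B})\otimes\bm{I}_{2\times 2}\big)\bm{D}^{T}.
\]

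Under Assumption~\ref{top}, $\mathcal{G}_c$ is connected, and at least one $\mu_i=1$. Lemma~\ref{connected} then gives $\mathcal{L}+\mathcal{B}>0$. The Kronecker product with $\bm{I}_{2\times 2}$ preserves the eigenvalues (each with doubled multiplicity), and congruence by the orthogonal matrix $\bm{D}$ preserves the spectrum. Therefore $\mathcal{L}_{\bar{\bm{R}}}+\mathcal{B}\otimes\bm{I}_{2\times 2}>0$.

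As a cross-check, and as an alternative write-up if the conjugation convention becomes awkward, I can compute the quadratic form directly. For $\bm{x}=[\bm{x}_1^T,\ldots,\bm{x}_N^T]^T$, using $a_{ij}=a_{ji}$ and $(\bar{\bm{R}}_{\mathcal{O}_i}^{\mathcal{O}_j})^T=\bar{\bm{R}}_{\mathcal{O}_j}^{\mathcal{O}_i}$,
\[
\bm{x}^T\big(\mathcal{L}_{\bar{\bm{R}}}+\mathcal{B}\otimes\bm{I}_{2\times 2}\big)\bm{x}=\tfrac12\sum_{i,j}a_{ij}\lVert\bm{x}_i-\bar{\bm{R}}_{\mathcal{O}_i}^{\mathcal{O}_j}\bm{x}_j\rVert^2+\sum_i\mu_i\lVert\bm{x}_i\rVert^2\ge 0.
\]
If this form vanishes, then $\bm{x}_i=\bar{\bm{R}}_{\mathcal{O}_i}^{\mathcal{O}_j}\bm{x}_j$ along every edge, so all $\lVert\bm{x}_i\rVert$ are equal on the connected graph. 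The informed robot forces $\bm{x}_i=0$ there, hence $\bm{x}=0$. Notably, this route needs only orthogonality of the blocks, not the cocycle identity, so it is robust to any sign-convention issue.
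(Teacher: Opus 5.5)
Your proposal is correct. Your ``cross-check'' is essentially the paper's own proof: the paper expands the quadratic form into $\frac12\sum_{(i,j)\in\mathcal{E}_c}\lVert\bm{x}_i-\bar{\bm{R}}_{\mathcal{O}_i}^{\mathcal{O}_j}\bm{x}_j\rVert^2+\sum_i\mu_i\lVert\bm{x}_i\rVert^2\ge 0$. In the equality case it propagates $\bm{x}_i=\bm{0}$ outward from the informed robot along edges, using invertibility of the rotations; you propagate equal norms instead, which is the same argument.

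Your primary route is genuinely different. You use the cycle-consistency $\bar{\bm{R}}_{\mathcal{O}_i}^{\mathcal{O}_j}=\bm{Q}_i\bm{Q}_j^T$ of the true initial relative yaws to write $\mathcal{L}_{\bar{\bm{R}}}+\mathcal{B}\otimes\bm{I}_{2\times 2}=\bm{D}\big((\mathcal{L}+\mathcal{B})\otimes\bm{I}_{2\times 2}\big)\bm{D}^T$, which reduces everything to Lemma~\ref{connected}. This gives more than the paper's argument does. It yields the exact spectrum, in particular $\lambda_{\min}(\mathcal{L}_{\bar{\bm{R}}}+\mathcal{B}\otimes\bm{I}_{2\times 2})=\lambda_{\min}(\mathcal{L}+\mathcal{B})$, which quantifies the exponential rate of the nominal attitude error system invoked in Theorem~\ref{t2}. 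It also shows that in the rotated coordinates $\bm{D}^T\tilde{\bm{\chi}}_{t_0}$ that nominal system is just standard leader--follower consensus.

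The price is dependence on cycle-consistency. That holds here because all $\bar{\bm{R}}_{\mathcal{O}_i}^{\mathcal{O}_j}$ come from physical frames, and you handle the sign-convention caveat correctly by swapping $\bm{Q}_i$ for $\bm{Q}_i^T$. The paper's sum-of-squares route is more elementary and more general. It needs only orthogonality of the blocks and $\bar{\bm{R}}_{\mathcal{O}_j}^{\mathcal{O}_i}=(\bar{\bm{R}}_{\mathcal{O}_i}^{\mathcal{O}_j})^T$, so it would survive edge yaws that are antisymmetric but not cycle-consistent. However, it gives only qualitative positive definiteness, with no eigenvalue bound.
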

	\begin{proof}
		To prove $\left( \mathcal{L}_{\bar{\bm{R}}} + \mathcal{B} \otimes \bm{I}_{2 \times 2} \right)>0$, we firstly prove that $\forall \bm{x} \in \mathbb{R}^{2N}$, inequality $\bm{x}^{T} \left( \mathcal{L}_{\bar{\bm{R}}} + \mathcal{B} \otimes \bm{I}_{2 \times 2} \right) \bm{x} \geq 0$ is hold, then we show that the equal sign only holds when $\bm{x} = \bm{0}$.
		Let $\bm{x} = [x_{11},x_{12},...,x_{N1},x_{N2}]^{T} = [\bm{x}_{1}^{T},...,\bm{x}_{N}^{T}]^{T}$, then $\bm{x}^{T} \left( \mathcal{L}_{\bar{\bm{R}}} + \mathcal{B} \otimes \bm{I}_{2 \times 2} \right) \bm{x}$ can be represented as
		\begin{align}
			\label{analysis}
			&\bm{x}^{T} \left( \mathcal{L}_{\bar{\bm{R}}} + \mathcal{B} \otimes \bm{I}_{2 \times 2} \right) \bm{x} \\
			\notag
			= &\sum_{i=1}^{N} \sum_{j=1}^{N} -a_{ij} \bm{x}_{i}^{T} \bar{\bm{\bm{R}}}_{\mathcal{O}_{i}}^{\mathcal{O}_{j}} \bm{x}_{j} + \sum_{i}^{N} \lvert \mathcal{N}_{i} \rvert \bm{x}_{i}^{T} \bm{x}_{i} + \sum_{i}^{N} \mu_{i} \bm{x}_{i}^{T} \bm{x}_{i}\\
			\notag
			= &\frac{1}{2}\sum_{(i,j)\in \mathcal{E}_{c}} \left( -2\bm{x}_{i}^{T} \bar{\bm{\bm{R}}}_{\mathcal{O}_{i}}^{\mathcal{O}_{j}} \bm{x}_{j} + \bm{x}_{i}^{T} \bm{x}_{i} + \bm{x}_{j}^{T} \bm{x}_{j} \right) + \sum_{i}^{N} \mu_{i} \bm{x}_{i}^{T} \bm{x}_{i}\\
			\notag
			= & \frac{1}{2}\sum_{(i,j)\in \mathcal{E}_{c}} \bm{x}_{i}^{T} \bm{x}_{i} +  \bm{x}_{j}^{T} \bar{\bm{\bm{R}}}_{\mathcal{O}_{j}}^{\mathcal{O}_{i}} \bar{\bm{\bm{R}}}_{\mathcal{O}_{i}}^{\mathcal{O}_{j}} \bm{x}_{j} -2\bm{x}_{i}^{T}  \bar{\bm{\bm{R}}}_{\mathcal{O}_{i}}^{\mathcal{O}_{j}} \bm{x}_{i} + \sum_{i}^{N} \mu_{i} \bm{x}_{i}^{T} \bm{x}_{i}\\
			\notag 
			= & \frac{1}{2}\sum_{(i,j)\in \mathcal{E}_{c}} \left( \bm{x}_{i} - \bar{\bm{\bm{R}}}_{\mathcal{O}_{i}}^{\mathcal{O}_{j}}\bm{x}_{j}\right)^{T} \left( \bm{x}_{i} - \bar{\bm{\bm{R}}}_{\mathcal{O}_{i}}^{\mathcal{O}_{j}}\bm{x}_{j}\right) + \sum_{i}^{N} \mu_{i} \bm{x}_{i}^{T} \bm{x}_{i}\geq 0.
		\end{align}
		Note that the second equation is due to the fact that $\bm{x}_{i}^{T} \bar{\bm{\bm{R}}}_{\mathcal{O}_{i}}^{\mathcal{O}_{j}} \bm{x}_{j} = \left( \bm{x}_{i}^{T} \bar{\bm{\bm{R}}}_{\mathcal{O}_{i}}^{\mathcal{O}_{j}} \bm{x}_{j} \right)^{T} = \bm{x}_{j}^{T} \bar{\bm{\bm{R}}}_{\mathcal{O}_{j}}^{\mathcal{O}_{i}} \bm{x}_{i}$. Next, we analyze the situation when the equal sign holds.
		According to Eq.(\ref{analysis}), when the equal sign holds, $\bm{x}_{i} = \bm{0}$. Due to the reversibility of the rotation matrix $\bar{\bm{\bm{R}}}_{\mathcal{O}_{i}}^{\mathcal{O}_{j}}$, one can conclude that  $\bm{x}_{j}=\bm{0}, \forall a_{ij}=1$. 
		Similarly, for each $j$ in node $i$'s neighbor set $\mathcal{N}_{i}$, when the equal sign holds, one can also conclude that  $\bm{x}_{k}=\bm{0}, \forall a_{kj}=1$. For a connected graph composed of a finite number of nodes, iterate this way and we can prove that $\bm{x}_{i}=\bm{0}, \forall i \in \mathcal{V}$, i.e. $\bm{x} = \bm{0}$. As a result, the coupled information matrix $\left( \mathcal{L}_{\bar{\bm{R}}} + \mathcal{B} \otimes \bm{I}_{2 \times 2} \right)$ is positive definite.
	\end{proof}
	
	Define the estimation error of the distributed coupled observers $\hat{\bm{p}}_{i,t_{0}}(t)$, $\hat{c}_{i,t_{0}}(t)$ and $\hat{s}_{i,t_{0}}(t)$ as $\tilde{\bm{p}}_{i,t_{0}}(t) = \hat{\bm{p}}_{i,t_{0}}(t)- \bm{p}^{\mathcal{O}_{i}}_{i0}(t_{0})$, $\tilde{c}_{i,t_{0}}(t) = \hat{c}_{i,t_{0}}(t) - \cos \theta_{\mathcal{O}_{i}}^{\mathcal{O}_{0}}(t_{0})$ and $\tilde{s}_{i,t_{0}}(t) = \hat{s}_{i,t_{0}}(t) - \sin \theta_{\mathcal{O}_{i}}^{\mathcal{O}_{0}}(t_{0})$.
	Define the orientation estimation errors as $\tilde{\bm{\chi}}_{t_{0}} = [\tilde{c}_{1,t_{0}},\tilde{s}_{1,t_{0}},...,\tilde{c}_{N,t_{0}},\tilde{s}_{N,t_{0}}]^{T}$ and position estimation error as $\tilde{\bm{p}}_{t_{0}}=[\tilde{\bm{p}}_{1,t_{0}}^{T},...,\tilde{\bm{p}}_{N,t_{0}}^{T}]^{T}$. The convergence of the CL estimator (\ref{initial_pose_estimator}) can be described in the following Theorem \ref{t2}.
	
	\begin{theorem}
		\label{t2}
		Consider the coupled distributed observer (\ref{initial_pose_estimator}), if Assumptions \ref{top} and \ref{a1} holds for each pair of $(i,j)\in \mathcal{E}_{c}$, then the estimation error $\tilde{\bm{p}}_{t_{0}}$ and $\tilde{\bm{\chi}}_{t_{0}}$ converge to $\bm{0}$ asymptotically.
	\end{theorem}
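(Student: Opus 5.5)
The plan is to write the error dynamics of (\ref{initial_pose_estimator}) as a nominal linear time-invariant system driven by the RL errors $\tilde{\bm{\Theta}}^{*}_{ij}$. The nominal part is Hurwitz by Lemma~\ref{lemma1}, and the perturbation is bounded in the form required by Lemma~\ref{tandem}, with exogenous signal converging to zero exponentially by Theorem~\ref{t1}.

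\textbf{Orientation errors first.} The pair $(c_{i,t_0},s_{i,t_0})$ satisfies $\bm{R}^{\mathcal{O}_0}_{\mathcal{O}_i}=\bm{R}^{\mathcal{O}_0}_{\mathcal{O}_j}\bm{R}^{\mathcal{O}_j}_{\mathcal{O}_i}$. In vector form this reads $[c_i,s_i]^T=\bar{\bm{R}}_{\mathcal{O}_i}^{\mathcal{O}_j}[c_j,s_j]^T$, where $\bar{\bm R}$ is built from the true $(c_{ij},s_{ij})$; the sign and index convention is to be matched with the definition of $\mathcal{L}_{\bar{\bm R}}$. For an informed robot we likewise have $[c_i,s_i]^T=[c_{i0},s_{i0}]^T$.

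Subtracting these identities from (\ref{initial_pose_estimator}) and writing $\hat c_{ij}=c_{ij}+\tilde c_{ij}$, $\hat s_{ij}=s_{ij}+\tilde s_{ij}$ gives
\begin{align}
\notag
\dot{\tilde{\bm{\chi}}}_{t_0}=-\left(\mathcal{L}_{\bar{\bm R}}+\mathcal{B}\otimes\bm I_{2\times2}\right)\tilde{\bm{\chi}}_{t_0}+g_{\chi}(t,\tilde{\bm\chi}_{t_0},\bm\Xi).
\end{align}
Here $\bm\Xi$ stacks all $\tilde{\bm\Theta}^{*}_{ij}$. The perturbation $g_\chi$ collects the terms $\tilde{\bar{\bm R}}_{ij}(\hat{\bm\chi}_j)$, which split as $\tilde{\bar{\bm R}}_{ij}\tilde{\bm\chi}_j+\tilde{\bar{\bm R}}_{ij}\bm\chi_j$, together with the $\mu_i$ terms. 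Since $\|\bm\chi_j\|=1$, this gives $\|g_\chi\|\le a_1\|\tilde{\bm\chi}\|\|\bm\Xi\|+a_2\|\bm\Xi\|$.

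By Lemma~\ref{lemma1} the matrix $\mathcal{L}_{\bar{\bm R}}+\mathcal{B}\otimes\bm I_{2\times2}$ is symmetric positive definite. The nominal system is therefore uniformly exponentially stable, with Lyapunov function $\tilde{\bm\chi}^T\tilde{\bm\chi}$. Theorem~\ref{t1}, applied to every edge under Assumption~\ref{a1}, gives $\bm\Xi\to0$ exponentially, where the sampled-data $\bm\Xi$ is treated as a piecewise-constant exogenous input. Lemma~\ref{tandem} then yields $\tilde{\bm\chi}_{t_0}\to0$.

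\textbf{Position errors as a cascade.} The true values satisfy $\bm p_{i,t_0}=\bm p_{j,t_0}+\bm R^{\mathcal{O}_0}_{\mathcal{O}_i}\bm p^{\mathcal{O}_i}_{ij}(t_0)$, since the frames compose. The position error dynamics then become
\begin{align}
\notag
\dot{\tilde{\bm p}}_{t_0}=-\left((\mathcal{L}+\mathcal{B})\otimes\bm I_3\right)\tilde{\bm p}_{t_0}+\bm w(t).
\end{align}
The input $\bm w$ consists of terms $\hat{\bm R}^{\mathcal{O}_0}_{\mathcal{O}_i}\hat{\bm p}_{ij}-\bm R^{\mathcal{O}_0}_{\mathcal{O}_i}\bm p_{ij}$ and $\tilde{\bm p}_{i0,t_0}$. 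These are bounded by a constant times $\|\tilde{\bm\chi}_{t_0}\|+\|\bm\Xi\|$, plus a product term $\|\tilde{\bm\chi}\|\|\bm\Xi\|$, using boundedness of the true $\bm p_{ij}$ and the Lipschitz property of $H_{ij}$.

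Lemma~\ref{connected} makes $\mathcal L+\mathcal B$ positive definite, so the nominal system is exponentially stable. The input $\bm w$ does not depend on $\tilde{\bm p}$ and vanishes asymptotically. An ISS/convolution argument, or Lemma~\ref{tandem} with $a_1=0$ and $\bm\Xi$ replaced by $(\tilde{\bm\chi},\bm\Xi)$, then gives $\tilde{\bm p}_{t_0}\to0$.

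\textbf{Main obstacle.} The delicate step is the first one: checking that the exact-RL part of the orientation error dynamics is precisely $-(\mathcal L_{\bar{\bm R}}+\mathcal B\otimes\bm I)\tilde{\bm\chi}$. This requires the rotation block in row $i$, column $j$ to coincide with the block appearing in Lemma~\ref{lemma1}, and the matrix to be symmetric. Symmetry holds because $\bar{\bm R}_{\mathcal O_j}^{\mathcal O_i}=(\bar{\bm R}_{\mathcal O_i}^{\mathcal O_j})^T$, and it is preserved by the convention $\hat c_{ji}=\hat c_{ij}$, $\hat s_{ji}=-\hat s_{ij}$ fixed after Theorem~\ref{t1}. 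I would verify the identity $\hat c_{ij}\hat c_j-\hat s_{ij}\hat s_j$ against the rotation product entrywise. Any leftover mismatch would be absorbed into $g_\chi$, which is legitimate because it is proportional to $\bm\Xi$.
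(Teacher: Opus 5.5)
Your proposal is correct and follows essentially the same route as the paper: first the orientation errors, using the attitude Laplacian of Lemma~\ref{lemma1}, the exponential RL convergence of Theorem~\ref{t1} and Lemma~\ref{tandem}; then the position errors as a cascade through $\mathcal{L}+\mathcal{B}$ and Lemma~\ref{connected}. Your version is slightly more careful than the paper's on two points: you keep the cross term $\tilde{\bar{\bm R}}_{ij}\bm\chi_j$, which the paper's written error dynamics omit, and you note that the position input contains $\tilde{\bm\chi}_{t_0}$ and so vanishes only asymptotically (the paper asserts exponential decay of $\bm\Xi_p$), which your ISS/converging-input argument handles.
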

	
	\proof{Firstly, we show the asymptotic convergence of the orientation observer $\hat{c}_{i,t_{0}}(t)$ and $\hat{s}_{i,t_{0}}(t)$. Considering the distributed coupled observer (\ref{initial_pose_estimator}), the derivative of $\tilde{c}_{i,t_{0}}(t)$ and $\tilde{s}_{i,t_{0}}(t)$ can be represented as
		\begin{subequations}
			\begin{align}
				\notag
				\dot{\tilde{c}}_{i,t_{0}} = &\sum_{j \in \mathcal{N}_{i}} a_{ij} \left( \cos \theta_{\mathcal{O}_{i}}^{\mathcal{O}_{j}} \tilde{c}_{j,t_{0}} - \sin \theta_{\mathcal{O}_{i}}^{\mathcal{O}_{j}} \tilde{s}_{j,t_{0}} - \tilde{c}_{i,t_{0}} \right) - \mu_{i}  \tilde{c}_{i,t_{0}} \\
				\notag
				&+ \sum_{j \in \mathcal{N}_{i}} a_{ij} \left( \tilde{c}_{ij,t_{0}}\tilde{c}_{j,t_{0}} - \tilde{s}_{ij,t_{0}}\tilde{s}_{j,t_{0}} \right) + \mu_{i} \tilde{c}_{i0,t_{0}}, \\
				\notag
				\dot{\tilde{s}}_{i,t_{0}} = &\sum_{j \in \mathcal{N}_{i}} a_{ij} \left( \sin \theta_{\mathcal{O}_{i}}^{\mathcal{O}_{j}} \tilde{c}_{j,t_{0}} + \cos \theta_{\mathcal{O}_{i}}^{\mathcal{O}_{j}} \tilde{s}_{j,t_{0}} - \tilde{s}_{i,t_{0}} \right) - \mu_{i}  \tilde{s}_{i,t_{0}} \\
				\notag
				&+ \sum_{j \in \mathcal{N}_{i}} a_{ij} \left( \tilde{s}_{ij,t_{0}}\tilde{c}_{j,t_{0}} + \tilde{c}_{ij,t_{0}}\tilde{s}_{j,t_{0}} \right) + \mu_{i} \tilde{s}_{i0,t_{0}}.
			\end{align}
		\end{subequations}
		Take the derivative of $\tilde{\bm{\chi}}_{t_{0}}$ and one can see that the following inequality holds:
		\begin{align}
			\notag
			\dot{\tilde{\bm{\chi}}}_{t_{0}} \leq -\left( \mathcal{L}_{\bar{\bm{R}}} + \mathcal{B} \otimes \bm{I}_{2 \times 2} \right) \tilde{\bm{\chi}}_{t_{0}} + \lVert \bm{\Xi}_{a} \rVert \lVert \tilde{\bm{\chi}}_{t_{0}} \rVert + \lVert \bm{\Xi}_{a} \rVert,
		\end{align}
		where $\bm{\Xi}_{a}\in \mathbb{R}^{2N}$ is defined as 
		\begin{align}
			\notag
			\bm{\Xi}_{a}=[\sum_{j\in \mathcal{N}_{1}} |\tilde{c}_{1j,t_{0}}|, \sum_{j\in \mathcal{N}_{1}} |\tilde{s}_{1j,t_{0}}|,...,\sum_{j\in \mathcal{N}_{N}} |\tilde{c}_{Nj,t_{0}}|, \sum_{j\in \mathcal{N}_{N}} |\tilde{s}_{Nj,t_{0}}|]^T.
		\end{align}
		According to Theorem \ref{t1}, $\bm{\Xi}_{a}\in \mathbb{R}^{2N}$ converges to $\bm{0}$ exponentially as $t \to \infty$. According to Lemma \ref{lemma1}, system $\dot{\tilde{\bm{\chi}}}_{t_{0}} \leq -\left( \mathcal{L}_{\bar{\bm{R}}} + \mathcal{B} \otimes \bm{I}_{2 \times 2} \right) \tilde{\bm{\chi}}_{t_{0}}$ is uniformly exponentially stable. According to Lemma \ref{tandem}, $\tilde{\bm{\chi}}_{t_{0}}$ converges to $\bm{0}$ asymptotically.
		Then we show the convergence of position estimation observer $\hat{\bm{p}}_{i,t_{0}}(t)$. Considering the distributed coupled observer (\ref{initial_pose_estimator}), the derivative of $\tilde{\bm{p}}_{i,t_{0}}(t)$ can be represented as 
		\begin{align}
			\notag
			\dot{\tilde{\bm{p}}}_{i,t_{0}}(t) = &\sum_{j \in \mathcal{N}_{i}} a_{ij}\left( \tilde{\bm{p}}_{j,t_{0}} - \tilde{\bm{p}}_{i,t_{0}} \right) - \mu_{i} \tilde{\bm{p}}_{i,t_{0}}  \\
			\notag	
			&+ \sum_{j \in \mathcal{N}_{i}} a_{ij} \left( \tilde{\bm{\bm{R}}}_{\mathcal{O}_{i}}^{\mathcal{O}_{0}} \hat{\bm{p}}_{ij,t_{0}} + \bm{\bm{R}}_{\mathcal{O}_{i}}^{\mathcal{O}_{0}} \tilde{\bm{p}}_{ij,t_{0}} \right) + \mu_{i} \tilde{\bm{p}}_{i0,t_{0}}.
		\end{align}
		Then the derivative of $\tilde{\bm{p}}_{t_{0}}$ can be calculated as
		\begin{align}
			\notag
			\dot{\tilde{\bm{p}}}_{t_{0}} = -\left( \left( \mathcal{L} + \mathcal{B} \right) \otimes \bm{I}_{3\times 3} \right) \tilde{\bm{p}}_{t_{0}} + \bm{\Xi}_{p},
		\end{align}
		where $\bm{\Xi}_{p}$ is defined as $\bm{\Xi}_{p} = [\Xi_{1,p},...,\Xi_{N,p}]^{T}$, with $\Xi_{i,p} = \sum_{j \in \mathcal{N}_{i}} a_{ij} \left( \tilde{\bm{\bm{R}}}_{\mathcal{O}_{i}}^{\mathcal{O}_{0}} \hat{\bm{p}}_{ij,t_{0}} + \bm{\bm{R}}_{\mathcal{O}_{i}}^{\mathcal{O}_{0}} \tilde{\bm{p}}_{ij,t_{0}} \right) + \mu_{i} \tilde{\bm{p}}_{i0,t_{0}}$. According to Lemma \ref{connected}, system $\dot{\tilde{\bm{p}}}_{t_{0}} = \left( \left( \mathcal{L} + \mathcal{B} \right) \otimes \bm{I}_{3\times 3} \right) \tilde{\bm{p}}_{t_{0}}$ is uniformly exponentially stable.
		According to Theorem \ref{t1}, $\bm{\Xi}_{p}\in \mathbb{R}^{3N}$ converges to $\bm{0}$ exponentially as $t \to \infty$. According to Lemma \ref{tandem}, we can conclude that $\tilde{\bm{p}}_{t_{0}}$ converges to $\bm{0}$ asymptotically.
		
	}
	
	\subsubsection{Convergence of the CL estimator}
	\blue{In this section, we establish the convergence of the real-time CL estimators. Define the observer errors as $\tilde{\bm{z}}_{i}(t)$, $\tilde{c}_{i}(t)$ and $\tilde{s}_{i}(t)$ as $\tilde{\bm{z}}_{i} = \hat{\bm{z}}_{i} - \bm{z}^{\mathcal{O}_{0}}_{0}(t)$, $\tilde{c}_{i}(t) = \hat{c}_{i}(t) - \cos \theta_{\Sigma_{0}}^{\mathcal{O}_{0}}(t)$ and $\tilde{s}_{i}(t) = \hat{s}_{i}(t) - \sin \theta_{\Sigma_{0}}^{\mathcal{O}_{0}}(t)$.
		The following theorem establishes the convergence of both the observer errors (i.e. $\tilde{\bm{z}}_{i}(t)$, $\tilde{c}_{i}(t)$ and $\tilde{s}_{i}(t)$) and real-time CL estimation errors (i.e. $\tilde{\bm{p}}_{i0}$, $\widetilde{\cos \theta_{{\Sigma}_{i}}^{{\Sigma}_{0}}}$ and $\widetilde{\sin \theta_{{\Sigma}_{i}}^{{\Sigma}_{0}}}$).}
	
	\begin{theorem}
		Consider the CL estimator (\ref{cooperative estimator}) composed of the distributed coupled estimator (\ref{initial_pose_estimator}) and distributed observer (\ref{distributed observer}). 
		If Assumptions \ref{top} and \ref{a1} holds for each pair of $(i,j)\in \mathcal{E}_{c}$, 
		then the CL estimation error$\lVert \tilde{\bm{p}}_{i0} \rVert$, $\widetilde{\cos \theta_{{\Sigma}_{i}}^{{\Sigma}_{0}}}$ and $\widetilde{\sin \theta_{{\Sigma}_{i}}^{{\Sigma}_{0}}}$ converge to 0 asymptotically.
	\end{theorem}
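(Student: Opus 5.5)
The proof has two parts. First we show that the distributed observer (\ref{distributed observer}) tracks the leader's odometry, i.e. $\tilde{\bm{z}}_i$, $\tilde{c}_i$, $\tilde{s}_i \to 0$. Then we write the real-time CL errors as compositions of bounded known quantities with the vanishing errors from Theorem \ref{t2} and from the first part.

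\textbf{Step 1: the distributed observer.} Stack the consensus errors as $\bm{e}_z = ((\mathcal{L}+\mathcal{B})\otimes \bm{I}_{3})\tilde{\bm{z}}$, with $\tilde{\bm{z}}=[\tilde{\bm{z}}_1^T,\dots,\tilde{\bm{z}}_N^T]^T$. The same identity holds for the angle errors: $\bm{e}_c = (\mathcal{L}+\mathcal{B})\tilde{\bm{c}}$ and $\bm{e}_s = (\mathcal{L}+\mathcal{B})\tilde{\bm{s}}$. By Lemma \ref{connected}, $\mathcal{L}+\mathcal{B}>0$, so $\tilde{\bm{z}}\to 0$ if and only if $\bm{e}_z\to 0$.

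The estimator $\hat{F}_{i,z}$ is a leader-following consensus on the constant $F_z$. Its dynamics must be read with the stabilizing sign, as in the usual design, so that $\hat{F}_{i,z}\to F_z$ exponentially by Lemma \ref{connected}.

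For the $\bm{z}$-channel, take the Lyapunov function $V_z = \tfrac12 \tilde{\bm{z}}^T((\mathcal{L}+\mathcal{B})\otimes \bm{I}_3)\tilde{\bm{z}}$. Its derivative is
\[
\dot V_z = -k_1\lVert \bm{e}_z\rVert^2 - \sum_i \frac{\lVert\bm{e}_{i,z}\rVert^2}{\sqrt{\lVert\bm{e}_{i,z}\rVert^2+\delta}}\hat{F}_{i,z} - \bm{e}_z^T(\bm{1}\otimes\dot{\bm{z}}^{\mathcal{O}_0}_0).
\]
The leader-velocity term is bounded by $v_{\max}\sum_i\lVert\bm{e}_{i,z}\rVert$. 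We also use the elementary bound $\frac{x^2}{\sqrt{x^2+\delta}}\ge x-\sqrt{\delta}$. Combining these,
\[
\dot V_z \le -k_1\lVert\bm{e}_z\rVert^2 + \sum_i \lVert \bm{e}_{i,z}\rVert\,|F_z-\hat{F}_{i,z}| + N F_z\sqrt{\delta}.
\]
The perturbation terms are either exponentially decaying ($\sqrt{\delta}=\sqrt{\beta}e^{-\alpha t/2}$) or products of $\lVert\bm{e}_z\rVert$ with an exponentially decaying factor. A comparison/Gr\"onwall argument therefore gives boundedness and then $V_z\to0$. This can also be phrased through Lemma \ref{tandem}, viewing $\bm{\Xi}=(\hat{F}-F_z\bm{1},\sqrt{\delta})$ as the vanishing exogenous signal.

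The cosine and sine channels are handled the same way. Their robustifying gain is $1$, which is at least $w_{\max}$ after a scaling assumption. This dominates $|\tfrac{d}{dt}\cos\theta^{\mathcal{O}_0}_{\Sigma_0}|\le w_{\max}$. Hence $\tilde{c}_i,\tilde{s}_i\to0$.

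\textbf{Step 2: the real-time errors.} Subtract (\ref{initial}) (with $j=0$) from (\ref{cooperative estimator}). This gives
\[
\tilde{\bm{p}}_{i0} = (\bm{R}^{\mathcal{O}_i}_{\Sigma_i})^T\Big[(\hat{\bm{R}}^{\mathcal{O}_0}_{\mathcal{O}_i})^T\hat{\bm{p}}_{i,t_0} - (\bm{R}^{\mathcal{O}_0}_{\mathcal{O}_i})^T\bm{p}_{i,t_0} - (\hat{\bm{R}}^{\mathcal{O}_0}_{\mathcal{O}_i})^T\hat{\bm{z}}_i + (\bm{R}^{\mathcal{O}_0}_{\mathcal{O}_i})^T\bm{z}^{\mathcal{O}_0}_0\Big].
\]
Each difference is split by adding and subtracting mixed terms:
\[
\hat{A}\hat{b}-Ab=\tilde{A}\hat{b}+A\tilde{b}.
\]
The rotation error $\tilde{\bm{R}}^{\mathcal{O}_0}_{\mathcal{O}_i}$ is linear in $(\tilde{c}_{i,t_0},\tilde{s}_{i,t_0})$, which tends to zero by Theorem \ref{t2}. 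Likewise $\tilde{\bm{p}}_{i,t_0}\to0$ by Theorem \ref{t2} and $\tilde{\bm{z}}_i\to0$ by Step 1. Since $(\bm{R}^{\mathcal{O}_i}_{\Sigma_i})^T$ is orthogonal, it follows that $\lVert\tilde{\bm{p}}_{i0}\rVert\to 0$.

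For the attitude, $\hat{\bm{R}}^{\Sigma_0}_{\Sigma_i}-\bm{R}^{\Sigma_0}_{\Sigma_i}$ splits in the same way into $\tilde{\bm{R}}^{\mathcal{O}_0}_{\Sigma_0}$, which vanishes by Step 1, and $\tilde{\bm{R}}^{\mathcal{O}_0}_{\mathcal{O}_i}$, which vanishes by Theorem \ref{t2}, each multiplied by bounded factors. Its entries are exactly $\widetilde{\cos\theta^{\Sigma_0}_{\Sigma_i}}$ and $\widetilde{\sin\theta^{\Sigma_0}_{\Sigma_i}}$, so both converge to zero.

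\textbf{Main obstacle.} The delicate point is that the mixed terms $\tilde{A}\hat{b}$ need $\hat{b}$ bounded. The term $\hat{\bm{z}}_i$ contains $\bm{z}^{\mathcal{O}_0}_0(t)$, which can grow linearly in time (at most $v_{\max}t$). Then $(\tilde{\bm{R}}^{\mathcal{O}_0}_{\mathcal{O}_i})^T\bm{z}^{\mathcal{O}_0}_0$ tends to zero only if $\tilde{\bm{R}}^{\mathcal{O}_0}_{\mathcal{O}_i}$ decays faster than $1/t$.

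To handle this, I would sharpen Theorem \ref{t2}. In that proof the inputs $\bm{\Xi}_a$ and $\bm{\Xi}_p$ decay exponentially by Theorem \ref{t1}, and the nominal system $-(\mathcal{L}_{\bar{\bm R}}+\mathcal{B}\otimes \bm{I}_2)\tilde{\bm{\chi}}_{t_0}$ is exponentially stable by Lemma \ref{lemma1}. Since $\tilde{\bm{\chi}}_{t_0}$ is bounded once it converges, the bilinear term is also an exponentially decaying input, and variation of constants gives exponential rather than merely asymptotic convergence of $\tilde{\bm{\chi}}_{t_0}$. Then $e^{-\gamma t}\,v_{\max}t\to0$, which closes the argument.

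If one prefers to avoid this refinement, the alternative is to assume bounded leader displacement, which is reasonable for formation tasks.
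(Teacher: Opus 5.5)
Your proof is correct, and it takes a more careful route than the paper's. The paper handles this theorem in three lines: Theorem \ref{t2} for the coupled estimator, Theorem 1 of \cite{ze2023tie} for $\tilde{\bm{z}}_i,\tilde c_i,\tilde s_i$, and the assertion that (\ref{cooperative estimator}) is globally Lipschitz in the estimator states. You instead prove the observer convergence directly, then split the errors as $\hat A\hat b-Ab=\tilde A\hat b+A\tilde b$. That split exposes what the Lipschitz claim glosses over. The term $(\hat{\bm{R}}^{\mathcal{O}_0}_{\mathcal{O}_i})^T\hat{\bm{z}}_i$ is bilinear, and $\hat{\bm{z}}_i$ tracks a leader displacement that can grow like $v_{\max}t$. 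So the term $(\tilde{\bm{R}}^{\mathcal{O}_0}_{\mathcal{O}_i})^T\bm{z}^{\mathcal{O}_0}_0$ is not controlled by the merely asymptotic convergence that Lemma \ref{tandem} delivers in Theorem \ref{t2}. Your upgrade is valid and closes that gap: take exponential decay of $\bm{\Xi}_a$ from Theorem \ref{t1} and boundedness of $\tilde{\bm{\chi}}_{t_0}$, then apply variation of constants against the exponentially stable nominal matrix of Lemma \ref{lemma1}. The two caveats in your Step 1 are genuine hidden hypotheses of the paper, not flaws in your argument. With $\tilde F_i=\hat F_{i,z}-F_z$, the printed update reads $\dot{\tilde F}=+(\mathcal{L}+\mathcal{B})\tilde F$ and is unstable. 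With unit robust gain, the cosine/sine channels need $w_{\max}\le 1$ for your Lyapunov bound; otherwise one generally gets only ultimate boundedness. The paper absorbs both into the citation. The paper's route buys brevity. Yours is self-contained and pinpoints exactly which rate or boundedness information the final composition step needs.
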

	
	\proof{\blue{According to Theorem \ref{t2}, the coupled estimation error $\lVert \tilde{\bm{p}}_{i,t_{0}}(t) \rVert$, $\tilde{c}_{i,t_{0}}(t)$, $\tilde{s}_{i,t_{0}}(t)$ converge to 0 asymptotically once sufficient data have been collected for each neighboring pair, i.e., when Assumption \ref{a1} is satisfied. }
		According to Theorem 1 in \cite{ze2023tie}, the observer error $\lVert \tilde{\bm{z}}_{i} \rVert$, $\tilde{c}_{i}(t)$ and $\tilde{s}_{i}(t)$ converge to 0 asymptotically when Assumption \ref{top} holds.
		According to Eq.(\ref{cooperative estimator}), the closed-loop estimator is global Lipschitz to the distributed couple estimator $\hat{\bm{p}}_{i,t_{0}}$, $\hat{c}_{i,t_{0}}$ and $\hat{s}_{i,t_{0}}$ and the distributed observer $\hat{\bm{z}}_{i}(t)$, $\hat{c}_{i}(t)$ and $\hat{s}_{i}(t)$.
		\blue{As a result, the closed-loop estimation errors as $\lVert \tilde{\bm{p}}_{i0} \rVert$, $\widetilde{\cos \theta_{{\Sigma}_{i}}^{{\Sigma}_{0}}}$ and $\widetilde{\sin \theta_{{\Sigma}_{i}}^{{\Sigma}_{0}}}$ converge to $0$ asymptotically provided that sufficient data are available for each neighboring pair.}
	}
	
	\section{Application and real-world experiments}
	\subsection{Application in formation control}
	We take the distributed formation control of nonholonomic robots as an example to illustrate how our CL method is applied to practical problems.
	
	Define the desired relative position between robot $i$ and leader $0$ in frame $\mathcal{O}_{0}$ as $\bm{\rho}_{i}^{\mathcal{O}_{0}}$, assume the acceleration and rotational acceleration of leader is bounded, i.e. $\lVert \dot{\bm{v}}_{0}^{\mathcal{O}_{0}} \rVert \leq a_{\mathrm{max}}$, $\dot{w}_{0} \leq \alpha_{\mathrm{max}}$.
	Define the transformed formation error in frame $\Sigma_{i}$ as $\bm{\epsilon}_{i} = [\bm{\epsilon}_{i,p}^{T}, \epsilon_{i,c}, \epsilon_{i,s}]^T$,where $\bm{\epsilon}_{i,p} = 
	\bm{p}^{\Sigma_{i}}_{i0} - \bm{\bm{R}}_{\mathcal{O}_{0}}^{\Sigma_{i}} \bm{\rho}_{i}^{\mathcal{O}_{0}}$, $\epsilon_{i,c} = 1 - \cos \theta_{{\Sigma}_{i}}^{{\Sigma}_{0}}$ and $\epsilon_{i,s} = \sin \theta_{{\Sigma}_{i}}^{{\Sigma}_{0}}$. 
	Define estimated formation error $\hat{\bm{\epsilon}}_{i} = [\hat{\bm{\epsilon}}_{i,p}^{T}, \hat{\epsilon}_{i,c}, \hat{\epsilon}_{i,s}]^T$, where $\hat{\bm{\epsilon}}_{i,p} = \hat{\bm{p}}^{\Sigma_{i}}_{i0} + \bm{\bm{R}}^{\Sigma_{i}}_{\mathcal{O}_{i}} \left( \hat{\bm{\bm{R}}}_{\mathcal{O}_{i}}^{\mathcal{O}_{0}} \right)^{T} \bm{\rho}_{i}^{\mathcal{O}_{0}}$, $\hat{\epsilon}_{i,s} = \widehat{\sin \theta_{{\Sigma}_{i}}^{{\Sigma}_{0}}}$ and $\hat{\epsilon}_{i,c} = 1- \widehat{\cos \theta_{{\Sigma}_{i}}^{{\Sigma}_{0}}}$.
	\blue{To satisfy Assumption \ref{a1}, the control strategy consists of two stages including data collection stage and CL estimator convergence stage. 
		In the first stage, each robot performs data collection according to the preset control program. }
	The control input can be designed as $\bm{v}_{i}^{\mathcal{O}_{i}} = [r_{i}w_{i}, k_{i,v} \sin k_{i,v}t]^T$ and $w_{i} = k_{i,w}$. 
	Note that the control input of the first control stage is only to collect the historical measurement information and can be optimized according to the convergence speed or noise robustness and so on. We only give a feasible example. 
	\blue{In the second stage, the robot conducts formation control based on the estimation results of collaborative localization.} The control input can be designed as $\bm{v}_{i}^{\mathcal{O}_{i}} = \hat{\bm{v}}_{i} - \begin{bmatrix} 
		&k_{2}, &k_{3}w_{0}, &0 \\
		&0, &0, &k_{4}
	\end{bmatrix} \hat{\bm{\epsilon}}_{i,p}$ and $w_{i} = \hat{w}_{i} -k_{5} \hat{\epsilon}_{i,s}$,
	where $\hat{\bm{v}}_{i}$ and $\hat{w}_{i}$ are designed to estimate the linear velocity and yaw rate of the leader robot. 
	The update law of them are designed as $\dot{\hat{\bm{v}}}_{i} = -k_{1} \bm{e}_{i,v} - \frac{\bm{e}_{i,v}^T}{\sqrt{\bm{e}_{i,v}^T\bm{e}_{i,v} + \delta}} \hat{F}_{i,v}$ and $\dot{\hat{w}}_{i} = -k_{1} e_{i,w} - \frac{e_{i,w}}{\sqrt{e_{i,w}^2 + \delta}} \hat{F}_{i,w}$, 
	where $\bm{e}_{i,v}$ and $e_{i,w}$ are designed as $\bm{e}_{i,v} = \sum_{j\in \mathcal{N}_{i}} a_{ij}\left( \hat{\bm{v}}_{i} - \hat{\bm{v}}_{j}\right) + \mu_{i} \left( \hat{\bm{v}}_{i} - \bm{v}^{\mathcal{O}_{0}}_{0}\right) $ and $e_{i,w} = \sum_{j\in \mathcal{N}_{i}} a_{ij}\left( \hat{w}_{i} - \hat{w}_{j}\right) + \mu_{i} \left( \hat{w}_{i} - w_{0}\right) $. 
	The update law of $F_{i,v}$ and $F_{i,w}$ are $\dot{F}_{i,v} = \sum_{j\in \mathcal{N}_{i}} a_{ij}\left( \hat{F}_{i,v} - \hat{F}_{j,v}\right) + \mu_{i} \left( \hat{F}_{i,v} - F_{v} \right)$ and $\dot{F}_{i,w} = \sum_{j\in \mathcal{N}_{i}} a_{ij}\left( \hat{F}_{i,w} - \hat{F}_{j,w}\right) + \mu_{i} \left( \hat{F}_{i,w} - F_{w} \right)$, where $F_{v} = a_{\mathrm{max}}$ and $F_{w} = \alpha_{\mathrm{max}}$.
	According to Theorem 3 in \cite{ze2024relative}, the formation error $\bm{\epsilon}_{i}$ converges to $\bm{0}$ asymptotically if and only if Assumption \ref{a1} is satisfied for each $(i,j) \in \mathcal{G}_{c}$ and the yaw rate of the leader satisfies $w_{0}>0$.
	
	\subsection{Experiment results}
	\begin{figure}[!t]\centering
		\centering
		\subfigure[]{
			\includegraphics[scale=1.02]{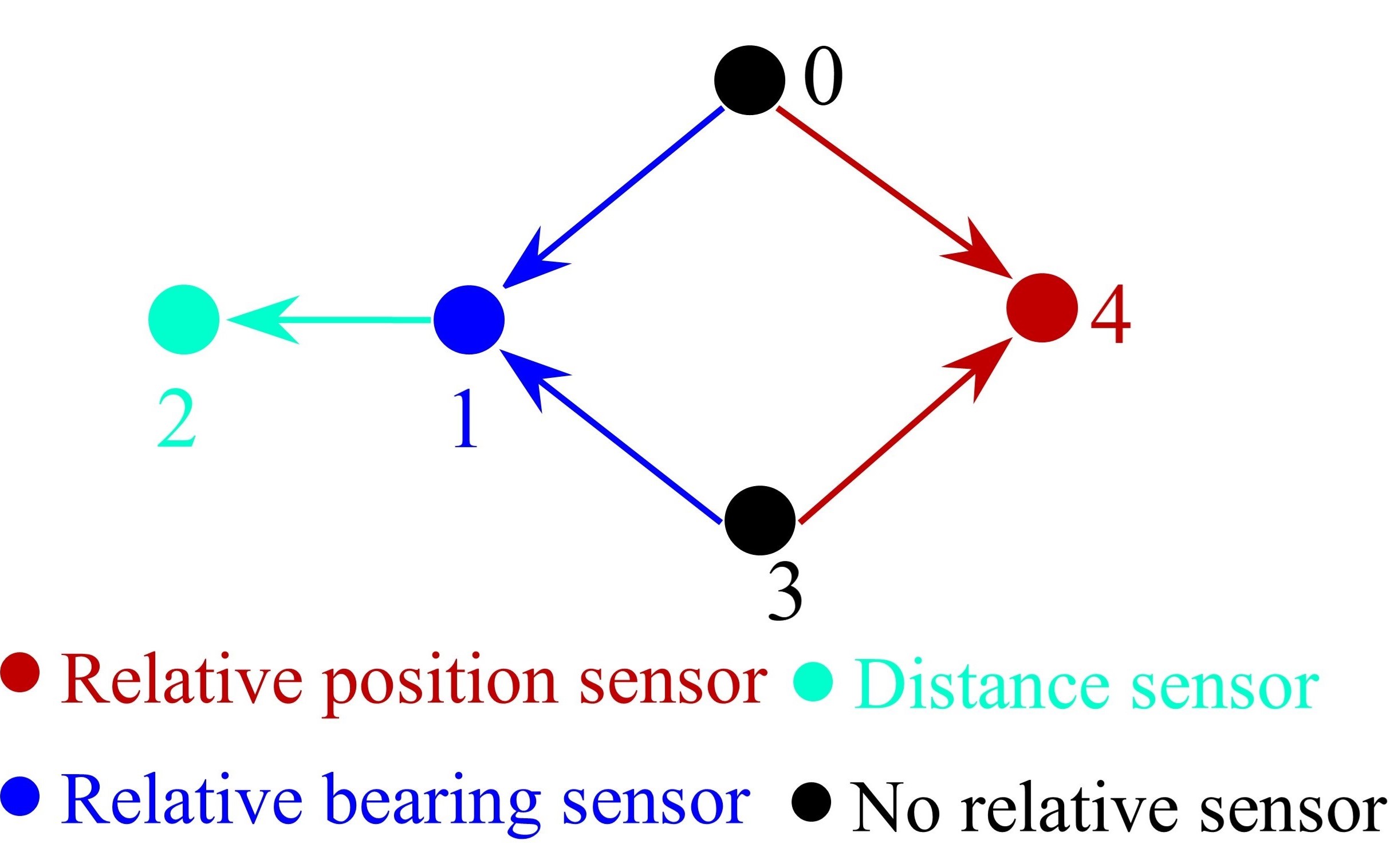}
			\label{experiment_a}
		}
		\subfigure[]{
			\includegraphics[scale=1.02]{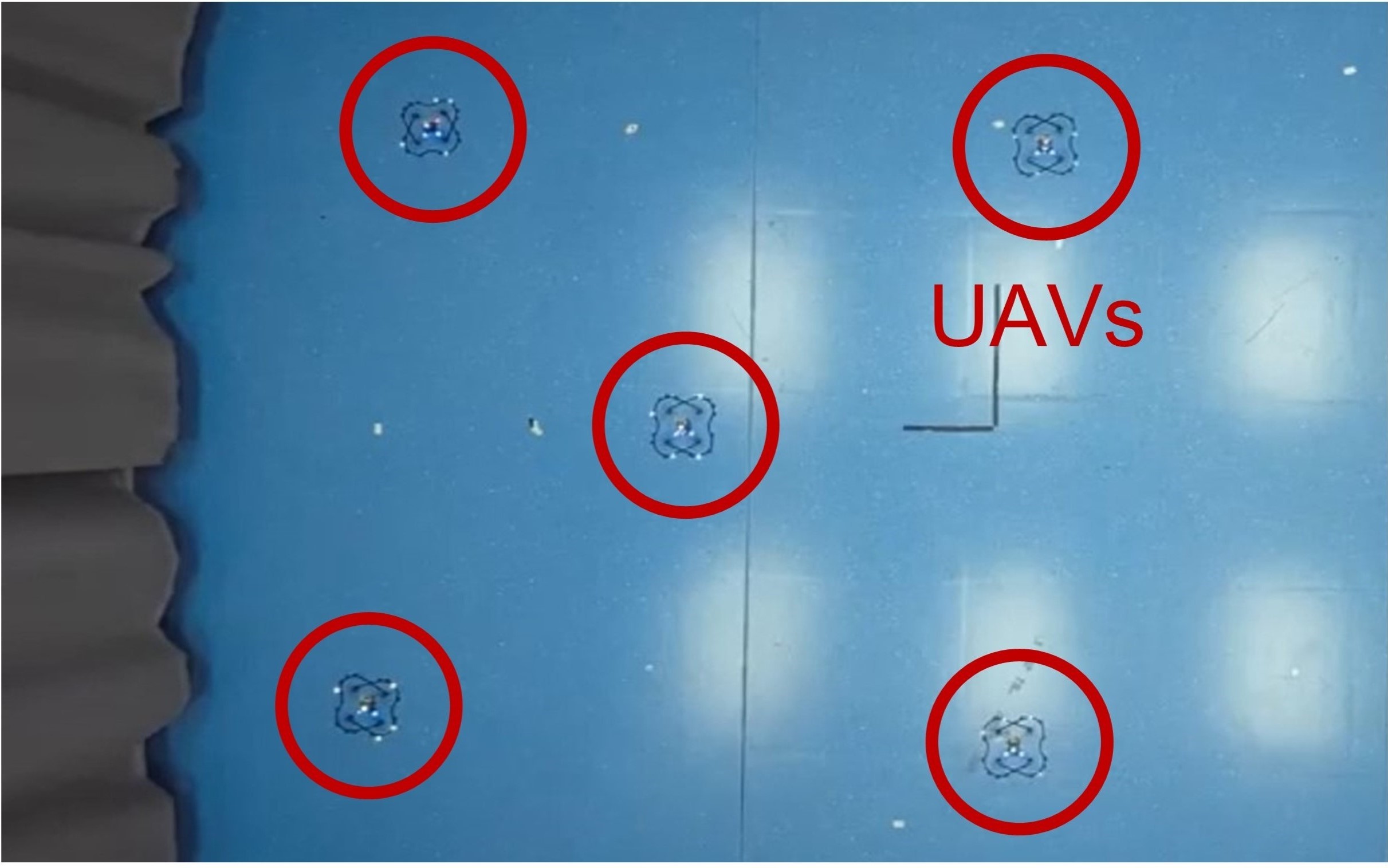}
			\label{experiment_b}
		}
		\caption{Experiment setup. (a)Measurement topology. (b)Experimental scene.}\centering
		\vspace{-0.5cm}
	\end{figure}
	
	\blue{The distributed 3-D formation experiments are carried out in an indoor environment. 
		A kind of micro-UAV Crazyflie with a dynamics model conforming to Eq.(\ref{model}) is adopted as the experimental platform. 
		The underlying control method of the platform can be found in Appendix D.}
	The relative measurement topology is shown in Fig.\ref{experiment_a} and the experimental scene is shown in Fig.\ref{experiment_b}. 
	Due to the limitation of the capability of the micro UAV to carry sensors, we used motion capture to measure the relative distance, orientation and position between the UAV and its neighbors in the odometer frame of each UAV, similar experimental strategy also appeared in RL result \cite{zhao2022tro}. 
	In practice, these signals can be measured by UWB, radar, camera or other sensors. 
	\blue{The analysis of noise measurement for CL estimation error can be found in Appendix C.}
	5 threads running on a ground station, each thread $i$ receives the real-time measurement information of robot $i$.
	The proposed distributed CL strategy and control law are executed by thread $i$. 
	Parameters used in the CL estimators are $\Delta t = 0.02$, $k_1 = 2$, $\alpha = 0.05$, $\beta = 2$. Parameters adopted in the control method are $v_{\mathrm{max}} = 1$ m/s, $a_{\mathrm{max}} = 1$m/$\mathbf{s}^2$, $\alpha_{\mathrm{max}} = 0.2$rad/$\mathbf{s}^2$, $r_{0} = r_{3} = 0.3$m, $r_{1} = r_{4} = 0.4$m, $r_{2}=0.6$m, $w_{0} = w_{3}=0.4$rad/s, $w_{1} = -0.2$rad/s, $w_{2}=0.3$rad/s, $w_{4}=-0.3$rad/s, $k_{0,v} = 0.4$, $k_{1,v} = 0.8$, $k_{2,v} = -0.4$, $k_{3,v} = -0.8$, $k_{4,v} = 1.2$. $k_2 = 2$, $k_3 = 7$, $k_4 = 0.3$, $k_2 = 1$. 
	In this experiment \footnote{Experiment media available: https://youtu.be/wQRBr9uJfmg}, motion capture system is used to record the ground truth pose of the robots. The CL estimation error $\tilde{\bm{p}}_{i0}$, $\widetilde{\cos \theta_{{\Sigma}_{i}}^{{\Sigma}_{0}}}$ and $\widetilde{\sin \theta_{{\Sigma}_{i}}^{{\Sigma}_{0}}}$ are shown in Fig.\ref{estimation error}. The formation tracking error $\lVert \bm{\epsilon}_{i,p} \rVert$, $\epsilon_{i,c}$ and $\epsilon_{i,s}$ are shown in Fig.\ref{formation error}. 
	\blue{The estimation error does not converge during 0-50s mainly due to insufficient data collection. 
		Once sufficient data have been collected (after 50s), both the estimation error and the tracking error converge rapidly.}
	
	\begin{figure}[!t]\centering
		\includegraphics[scale=1.2]{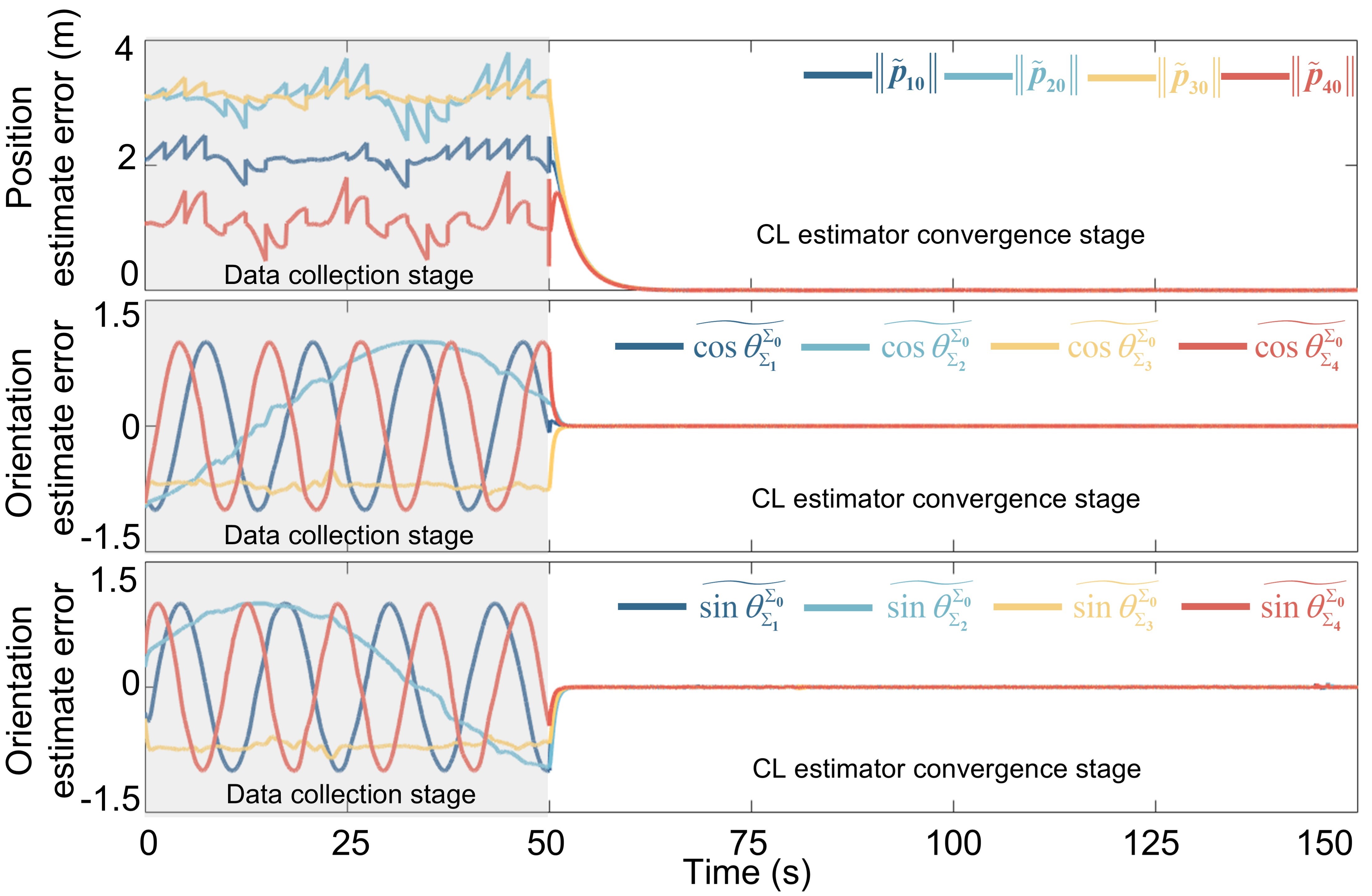}
		\caption{Cooperative localization estimation error of each robot $i$. }\centering
		\label{estimation error}
		\vspace{-0.5cm}
	\end{figure}
	
	\begin{figure}[!t]\centering
		\includegraphics[scale=1.2]{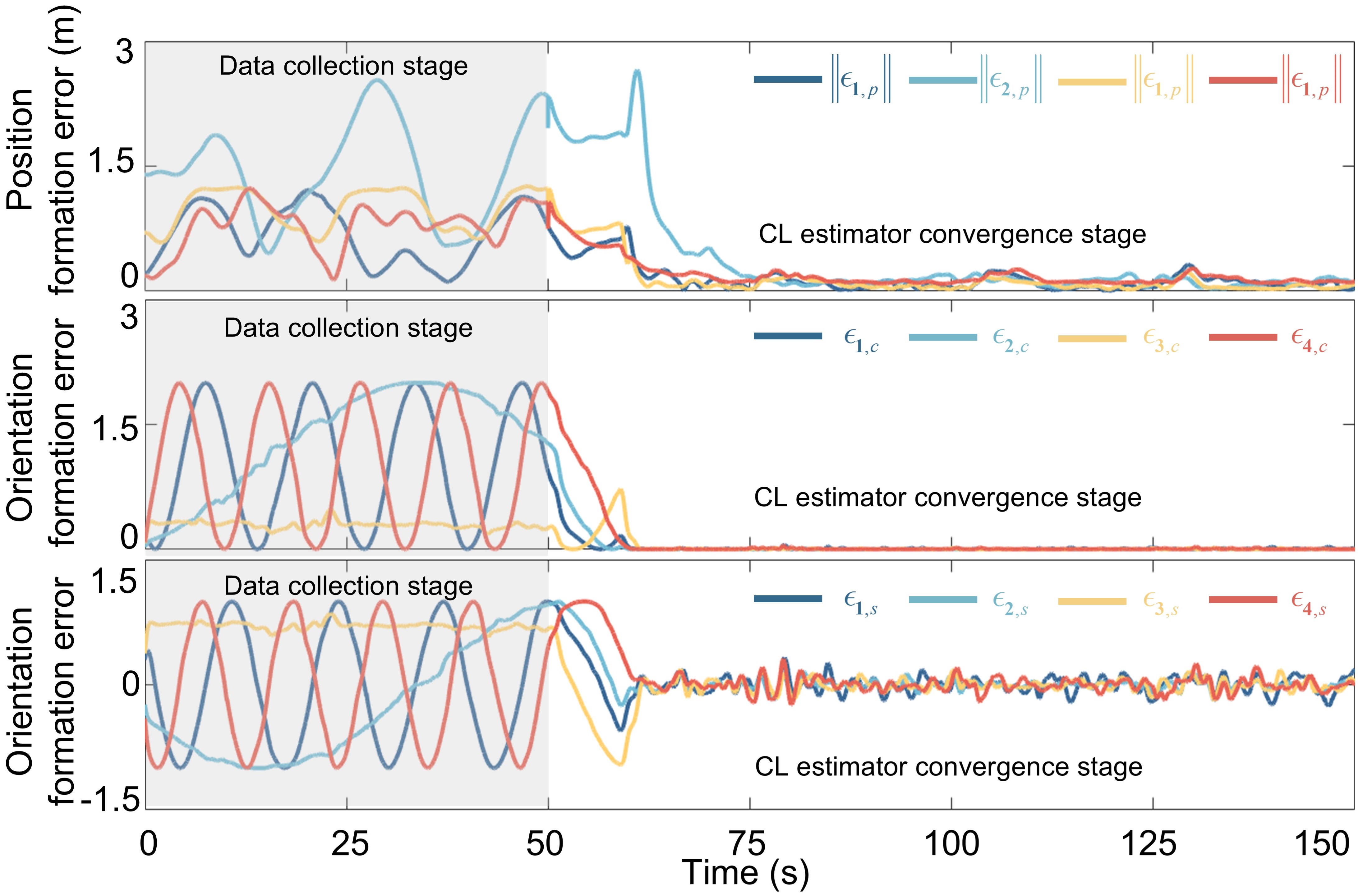}
		\caption{Formation tracking error of each robot $i$. }\centering
		\label{formation error}
		\vspace{-0.5cm}
	\end{figure}

	\begin{table}[!t]
		\blue{\caption{Main symbols used in this paper}
			\centering
			\begin{tabular}{ll}
				\hline\hline \\[-3mm]
				\multicolumn{1}{c}{Symbol} & \multicolumn{1}{c}{Description} \\ \hline \\[-2mm]
				$ \mathcal{O}_i,\Sigma_i $  
				& Odometry and Body frame of robot $i$. \\[1.2ex]
				$ \bm{T}^{\mathcal{O}_i}_{\mathcal{O}_j} $  
				& Initial relative pose between frames $\mathcal{O}_i$ and $\mathcal{O}_j$. \\[1.2ex]
				$ y_{ij}(t_k) $  
				& Innovation constructed from relative measurements. \\[1.2ex]
				$ \bm{\Phi}^{\mathcal{O}_i}_{ij}(t_k) $  
				& Regression basis vector for relative localization. \\[1.2ex]
				$\hat{\bm{p}}_{i,t_0},\hat{\bm{\bm{R}}}^{\mathcal{O}_{i}}_{\mathcal{O}_{0}}$  
				&  \pbox{10cm}{ Estimate the initial relative pose between robots $i$ and \\the leader. }
				\\[1.2ex]
				$\hat{\bm{z}}_{i},\hat{\bm{\bm{R}}}^{\mathcal{O}_{0}}_{\Sigma_{0}}$
				& Estimate of the leader's odometry information. \\[1.2ex]
				$\hat{\bm{p}}_{i0}^{\Sigma_{i}}, \hat{\bm{R}}^{\Sigma_0}_{\Sigma_i}$  
				& \pbox{10cm}{ Estimate the real-time relative pose between robots $i$ and \\ the leader.} \\[1.2ex]
				\hline\hline \\[-4mm]
			\end{tabular}
			\label{Table1}}
	\end{table}
	
	\begin{algorithm}[!t]
		\blue{\label{RLCL}
			\small
			\caption{Distributed localization algorithm (RL--CL cascade)}
			\KwIn{Local odometry $\bm{z}^{\mathcal{O}_{i}}_{i}(t_k),\,\bm{R}_{\Sigma_i}^{\mathcal{O}_i}(t_k)$; relative measurements $m^{\Sigma_i}_{ij}(t_k)$; received neighbor odometry $\bm{z}^{\mathcal{O}_{j}}_{j}(t_k),\,\bm{R}_{\Sigma_j}^{\mathcal{O}_j}(t_k)$.}
			\KwOut{RL estimates $\hat{\bm{\Theta}}^{*}_{ij}(t_k)$; CL estimates $\hat{\bm{p}}_{i0}(t_k),\,\hat{\bm{R}}^{\Sigma_0}_{\Sigma_i}(t_k)$.}
			\BlankLine
			\While{each sampling instant $t_k$}{
				\For{each neighbor $j\in\mathcal{N}_i$}{
					Construct regression pair $\big(y_{ij}(t_k),\,\bm{\Phi}^{\mathcal{O}_i}_{ij}(t_k)\big)$\;
					Update parameter estimator $\hat{\bm{\Theta}}_{ij}$ using Eq.(14)\;
					Recover initial relative pose estimate $\hat{\bm{\Theta}}^{*}_{ij}$ using Eq.(15)\;
				}
				Update distributed coupled estimators $\hat{\bm{p}}_{i,t_0}, \hat{\bm{\bm{R}}}_{\mathcal{O}_{i}}^{\mathcal{O}_{0}}$ using Eq.(18)\;
				Update distributed observers $\hat{\bm{z}}_{i}, \hat{\bm{\bm{R}}}^{\mathcal{O}_{0}}_{\Sigma_{0}}$ using Eq.(19)\;
				Compute CL estimators $\hat{\bm{p}}_{i0}^{\Sigma_{i}}$ and $\hat{\bm{R}}^{\Sigma_0}_{\Sigma_i}$ using Eq.(17)\;
		}}
	\end{algorithm}
	
	\section{Conclusion}
	In this paper, we have proposed a unified CL algorithm for heterogeneous measurement swarm.
	With the proposed data-driven RL estimator, the RL task is achieved for each pair of neighboring robots in the measurement topology.
	The proposed pose coupling estimator realized the CL task of the swarm system under weakly connected measurement topology, which is also the least restrictive measurement topology condition in the existing results. 
	The effectiveness of the proposed method is demonstrated through its application to the real-world formation control task of five nonholonomic UAVs.

\appendices

\section{Notation list of the proposed method}
\label{Notation}
\blue{For ease of reading, in this section, we provide a table of the main symbols that appear in this article in the following Table \ref{Table1}.}

\section{Pseudocode of the proposed method}
\label{Pseudocode}
\blue{The pseudocode of the proposed data-driven localization method is shown in Algorithm \ref{RLCL}.
At each sampling instant, the robot first performs relative localization (RL) with respect to its neighbors by constructing scalar regression pairs and updating the corresponding parameter estimator($\hat{\bm{\Theta}}^{*}_{ij}$). 
The recovered initial relative pose estimates are then fused through a distributed coupled estimator($\hat{\bm{p}}_{i,t_0},\,\hat{\bm{\bm{R}}}_{\mathcal{O}_{i}}^{\mathcal{O}_{0}}$) to estimate the robot's initial pose relative to the leader. 
In parallel, a distributed observer($\hat{\bm{z}}_{i},\,\hat{\bm{\bm{R}}}^{\mathcal{O}_{0}}_{\Sigma_{0}}$) is employed to estimate the leader's real-time odometric displacement and angular displacement. 
Finally, the cooperative localization (CL) estimate($\hat{\bm{p}}_{i0}^{\Sigma_{i}}$ and $\hat{\bm{R}}^{\Sigma_0}_{\Sigma_i}$) of the robot's real-time pose with respect to the leader is obtained by combining the estimated initial relative pose and the observed leader motion.
} 

\section{Discussion on the measurement noise}
\label{Noise}
\blue{In reality, both relative sensors (such as UWB, camera) and odometer sensors have measurement errors, which is also a key concern for us in the repair process. But for most measurements, it can be considered that the measurement error is bounded. 
Even for some measurements with relatively large errors, they can be effectively eliminated based on some outlier detection algorithms \cite{ze2024relative}. 
Therefore, it is reasonable to assume that the measurement error is bounded, which is also mentioned in many related achievements. In our distributed localization framework, the overall design consists of two interconnected components. The first component is the relative localization (RL) between each agent and its neighbors, while the second component is a cooperative localization (CL) module, in which each agent estimates the relative pose of the leader by exploiting information exchanged with its neighbors. The observations involved in these two components are cascaded. 
}

\blue{Next, we provide a detailed analysis of how the bounded sensor measurement noise propagates through the system and affects the final performance of cooperative localization. 
Specifically, we first analyze the impact of measurement noise on the relative localization stage. 
Based on this analysis, we then investigate how the noise influences the convergence properties of the cooperative localization error.
When the sensor measurement error is bounded, the error of the innovation $y_{ij}$ and the norm of basis function$\bm{\Phi}_{ij}$ is also bounded.
Reference \cite{muhlegg2012concurrent} analyzed the estimation error of parallel learning in the presence of measurement noise. 
According to its theorem 1 in \cite{muhlegg2012concurrent}, when the measurement error is bounded, there is a linear relationship between the estimation error and the upper bound of the error of the innovation $y_{ij}$ and the norm of basis function$\bm{\Phi}_{ij}$. 
As a result, under the bounded measurement error condition, one can see that for each robot $i$ and its neighbor robot $j$, the RL error $\tilde{\bm{p}}_{ij,t_{0}}$, $\tilde{c}_{i,t_{0}}(t)$ and $\tilde{s}_{i,t_{0}}(t)$ are bounded. }

\blue{Based on the above analysis, the boundedness of the RL errors further implies that the disturbance terms appearing in the cooperative localization error dynamics are bounded.
	For the position estimation part of CL, recall that the estimation error dynamics is given by
	\begin{align}
		\dot{\tilde{\bm{p}}}_{t_{0}} = -\left( \left( \mathcal{L} + \mathcal{B} \right) \otimes \bm{I}_{3\times 3} \right) \tilde{\bm{p}}_{t_{0}} + \bm{\Xi}_{p},
	\end{align}
	where the disturbance term $\bm{\Xi}_{p}$ depends on the RL estimation errors.
	Since the RL errors $\tilde{\bm{p}}_{ij,t_{0}}$, $\tilde{c}_{ij,t_{0}}$ and $\tilde{s}_{ij,t_{0}}$ are bounded, it follows that $\bm{\Xi}_{p}$ is also bounded.
	Under Assumption 1 in the revised manuscript, the matrix $\left( \mathcal{L} + \mathcal{B} \right)$ is positive definite.
	Therefore, the position estimation error dynamics is input-to-state stable with respect to $\bm{\Xi}_{p}$, which implies that $\tilde{\bm{p}}_{t_{0}}$ is uniformly ultimately bounded in the presence of bounded measurement noise.
	For the attitude estimation part, the cooperative localization error satisfies
	\begin{align}
		\dot{\tilde{\bm{\chi}}}_{t_{0}} \leq 
		-\left( \mathcal{L}_{\bar{\bm{R}}} + \mathcal{B} \otimes \bm{I}_{2 \times 2} \right) \tilde{\bm{\chi}}_{t_{0}} 
		+ \lVert \bm{\Xi}_{a} \rVert \lVert \tilde{\bm{\chi}}_{t_{0}} \rVert 
		+ \lVert \bm{\Xi}_{a} \rVert ,
	\end{align}
	where $\bm{\Xi}_{a}$ is constructed from the RL attitude estimation errors and is directly related to the measurement noise level.
	Under Assumption 1, the matrix $\mathcal{L}_{\bar{\bm{R}}} + \mathcal{B} \otimes \bm{I}_{2 \times 2}$ is positive definite.
	Therefore, the attitude error dynamics can be interpreted as a stable linear system subject to bounded multiplicative and additive perturbations.
	When the noise level is not high, by employing standard Lyapunov analysis together with Young's inequality, the negative definite term associated with $\mathcal{L}_{\bar{\bm{R}}} + \mathcal{B} \otimes \bm{I}_{2 \times 2}$ can dominate the coupling term $\lVert \bm{\Xi}_{a} \rVert \lVert \tilde{\bm{\chi}}_{t_{0}} \rVert$.
	As a result, the attitude estimation error $\tilde{\bm{\chi}}_{t_{0}}$ is uniformly ultimately bounded.}

\section{Experimental platform}
\label{Exp}
\begin{figure}[!t]\centering
	\includegraphics[scale=1]{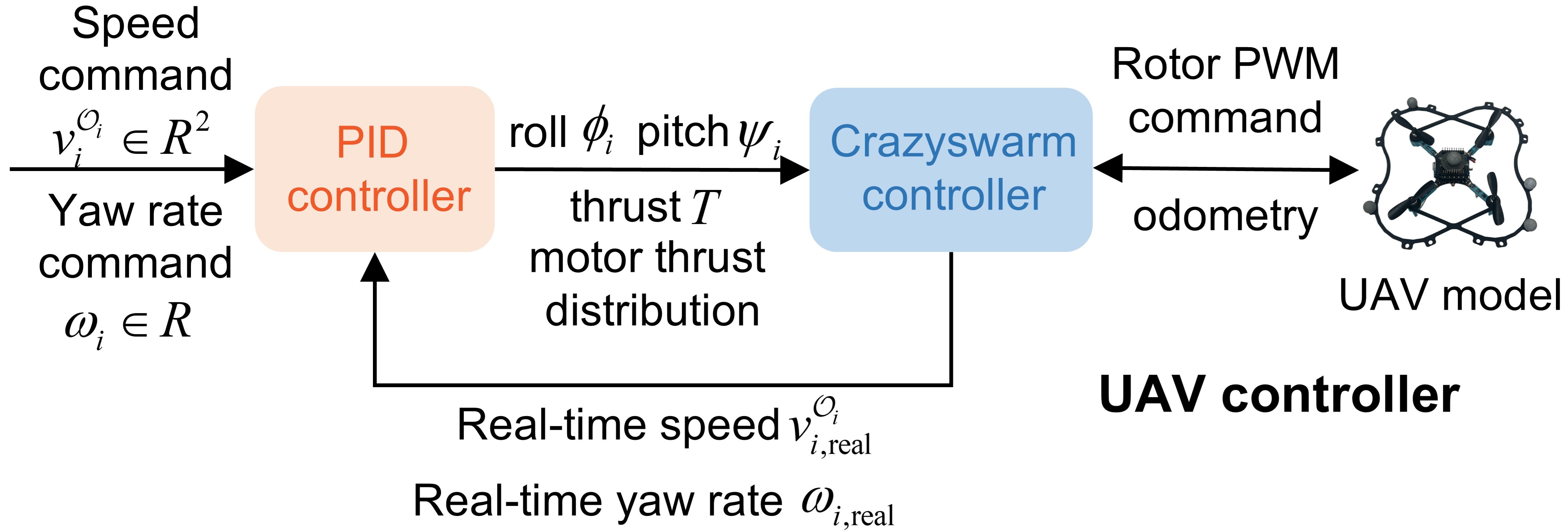}
	\caption{UAV controller based on IMU and odometer.}\centering
	\label{uav_controller}
	\vspace{-0.5cm}
\end{figure}
\blue{In the experimental validation, quadrotors are controlled in a manner consistent with model Eq.(\ref{model}). 
	As show in Fig. \ref{uav_controller}, the control interface of the aerial robot accepts the velocity command $\bm{v}_{i}^{\mathcal{O}_{i}}$and yaw rate command $w_{i}$. 
	The quadrotor regulates its horizontal motion by adjusting pitch angle $\psi_{i}$ and roll angle $\phi_{i}$, which are accurately measured by the IMU and controlled in closed loop by the underlying flight controller (e.g., Crazyswarm controller). 
	The vertical velocity and yaw rate are regulated through thrust $T$ and motor thrust distribution. 
	Under this configuration, the quadrotors are consistent with the kinematic model adopted in the paper.}

\footnotesize
\bibliographystyle{IEEEtran}
\bibliography{IEEEabrv,references}

\begin{thebibliography}{10}
\providecommand{\url}[1]{#1}
\csname url@samestyle\endcsname
\providecommand{\newblock}{\relax}
\providecommand{\bibinfo}[2]{#2}
\providecommand{\BIBentrySTDinterwordspacing}{\spaceskip=0pt\relax}
\providecommand{\BIBentryALTinterwordstretchfactor}{4}
\providecommand{\BIBentryALTinterwordspacing}{\spaceskip=\fontdimen2\font plus
\BIBentryALTinterwordstretchfactor\fontdimen3\font minus
  \fontdimen4\font\relax}
\providecommand{\BIBforeignlanguage}[2]{{%
\expandafter\ifx\csname l@#1\endcsname\relax
\typeout{** WARNING: IEEEtran.bst: No hyphenation pattern has been}%
\typeout{** loaded for the language `#1'. Using the pattern for}%
\typeout{** the default language instead.}%
\else
\language=\csname l@#1\endcsname
\fi
#2}}
\providecommand{\BIBdecl}{\relax}
\BIBdecl

\bibitem{ze2025tase}
J.~L{\"u}, K.~Ze, S.~Yue, K.~Liu, W.~Wang, and G.~Sun, ``Concurrent-learning
  based relative localization in shape formation of robot swarms,'' \emph{IEEE
  Trans. Autom. Sci. Eng.}, 2025.

\bibitem{fang2025Auto}
X.~Fang, L.~Xie, and D.~V. Dimarogonas, ``Simultaneous distributed localization
  and formation tracking control via matrix-weighted position constraints,''
  \emph{Automatica}, vol. 175, p. 112188, 2025.

\bibitem{xie2023tac}
X.~Fang, L.~Xie, and X.~Li, ``Integrated relative-measurement-based network
  localization and formation maneuver control,'' \emph{IEEE Trans. Autom.
  Control}, vol.~69, no.~3, pp. 1906--1913, 2023.

\bibitem{xie2025tro}
L.~Chen, C.~Liang, S.~Yuan, M.~Cao, and L.~Xie, ``Relative localizability and
  localization for multi-robot systems,'' \emph{IEEE Trans. Robot.}, 2025.

\bibitem{zhao2016Auto}
S.~Zhao and D.~Zelazo, ``Localizability and distributed protocols for
  bearing-based network localization in arbitrary dimensions,''
  \emph{Automatica}, vol.~69, pp. 334--341, 2016.

\bibitem{lv2024tii}
Y.~Lv, Z.~Wu, H.~Zhang, Z.~Wang, and H.~Yan, ``Local-bearing-based
  prescribed-time distributed localization of multiagent systems with noisy
  measurement,'' \emph{IEEE Trans. Ind. Inform.}, 2024.

\bibitem{van2020Auto}
Q.~Van~Tran, B.~D. Anderson, and H.-S. Ahn, ``Pose localization of
  leader--follower networks with direction measurements,'' \emph{Automatica},
  vol. 120, p. 109125, 2020.

\bibitem{boughellaba2023lcs}
M.~Boughellaba and A.~Tayebi, ``Bearing-based distributed pose estimation for
  multi-agent networks,'' \emph{IEEE Control Syst. Lett.}, vol.~7, pp.
  2617--2622, 2023.

\bibitem{zheng2025Auto}
C.~Zheng, Y.~Mi, H.~Guo, H.~Chen, Z.~Lin, and S.~Zhao, ``Optimal
  spatial--temporal triangulation for bearing-only cooperative motion
  estimation,'' \emph{Automatica}, vol. 175, p. 112216, 2025.

\bibitem{lcs2024bounded}
N.~De~Carli, E.~Restrepo, and P.~R. Giordano, ``Adaptive observer from
  body-frame relative position measurements for cooperative localization,''
  \emph{IEEE Control Syst. Lett.}, 2024.

\bibitem{tnse2025}
S.~Wang, H.~Wang, W.~Che, and Q.~Li, ``Distributed leader-following formation
  control of networked mobile robots via global orientation estimation,''
  \emph{IEEE Trans. Netw. Sci. Eng.}, 2025.

\bibitem{liu2023Auto}
Y.~Liu and Z.~Liu, ``Distributed adaptive formation control of multi-agent
  systems with measurement noises,'' \emph{Automatica}, vol. 150, p. 110857,
  2023.

\bibitem{Biggs1993book}
N.~Biggs, \emph{Algebraic graph theory}.\hskip 1em plus 0.5em minus 0.4em\relax
  Cambridge, U.K.: Cambridge university press, 1993.

\bibitem{Zhao2023TM}
J.~Zhao, K.~Zhu, H.~Hu, X.~Yu, X.~Li, and H.~Wang, ``Formation control of
  networked mobile robots with unknown reference orientation,'' \emph{IEEE-ASME
  Trans. Mechatron.}, vol.~28, no.~4, pp. 2200--2212, 2023.

\bibitem{ze2024relative}
K.~Ze, W.~Wang, S.~Yue, G.~Sun, K.~Liu, and J.~L{\"u}, ``Relative pose
  estimation for nonholonomic robot formation with uwb-io measurements,''
  \emph{arXiv preprint arXiv:2411.05481}, 2024.

\bibitem{fang20203}
X.~Fang, X.~Li, and L.~Xie, ``3-d distributed localization with mixed local
  relative measurements,'' \emph{IEEE Trans. Signal Process.}, vol.~68, pp.
  5869--5881, 2020.

\bibitem{ze2023tie}
K.~Ze, W.~Wang, K.~Liu, and J.~L{\"u}, ``Time-varying formation planning and
  distributed control for multiple uavs in clutter environment,'' \emph{IEEE
  Trans. Ind. Electron.}, vol.~71, no.~9, pp. 11\,305--11\,315, 2023.

\bibitem{zhao2022tro}
J.~Li, Z.~Ning, S.~He, C.-H. Lee, and S.~Zhao, ``Three-dimensional bearing-only
  target following via observability-enhanced helical guidance,'' \emph{IEEE
  Trans. Robot.}, vol.~39, no.~2, pp. 1509--1526, 2022.

\bibitem{muhlegg2012concurrent}
M.~M{\"u}hlegg, G.~Chowdhary, and E.~Johnson, ``Concurrent learning adaptive
  control of linear systems with noisy measurements,'' in \emph{AIAA Guidance,
  Navigation, and Control Conference}, 2012, p. 4669.

\end{thebibliography}

\end{document}